\def\shownotes{0} 
\newcommand{\authnote}[2]{{[#1: #2]}}
\newcommand{\authnote}[2]{{}}
\newtheorem{theorem}{Theorem}[section]
\newtheorem{proposition}[theorem]{Proposition}
\newtheorem{lemma}[theorem]{Lemma}
\newtheorem{corollary}[theorem]{Corollary}
\newtheorem{conjecture}[theorem]{Conjecture}
\newtheorem{definition}[theorem]{Definition}
\newtheorem{example}[theorem]{Example}
\newtheorem{assumption}[theorem]{Assumption}
\newtheorem{remark}[theorem]{Remark}
\numberwithin{equation}{section}
\title{Why Do Local Methods Solve Nonconvex Problems?}
\author{Tengyu Ma\\
Stanford University}
\newenvironment{customthm}[1]
{\innercustomthm}
{\endinnercustomthm}
\newcommand{\Po}{P_{\Omega}}
\renewcommand{\ni}{\noindent}
\newcommand{\argmax}{\textup{argmax}}
\newcommand{\argmin}{\textup{argmin}}
\newcommand{\Set}[1]{{\left\{#1\right\}}}
\newcommand{\hessian}{\textup{Hess }}
\newcommand{\grad}{\textup{grad }}
\newcommand{\norm}[1]{\lVert#1\rVert}
\newcommand{\Norm}[1]{\left\lVert#1\right\rVert}
\newcommand{\Id}{I}
\newcommand{\mper}{\,.}
\newcommand{\mcom}{\,,}
\newcommand{\R}{\mathbb R}
\numberwithin{equation}{section}
\newcommand{\E}{\mathbb{E}}
\newcommand{\Exp}{\mathop{\mathbb E}\displaylimits}
\newcommand{\eps}{\epsilon}
\newcommand{\diag}{\mathop{\mathrm{diag}}}
\newcommand{\BlackBox}{\rule{1.5ex}{1.5ex}}  
\def\QED{~\rule[-1pt]{5pt}{5pt}\par\medskip}
\newenvironment{proof}{\par\noindent{\bf Proof\ }}{\hfill\BlackBox\\[2mm]}
\newtheorem{theorem}{Theorem}
\newtheorem{lemma}[theorem]{Lemma}
\newtheorem{definition}[theorem]{Definition}
\newcommand{\inner}[1]{\langle #1\rangle}
\newcommand\poly{\mbox{poly}}
\newcommand{\xstar}{x^*}
\newcommand\polylog{\operatorname{polylog}}
\newcommand{\sphere}{S}
\newcommand{\cM}{\mathcal{M}}
\newcommand{\tangent}{T}
\newcommand{\CONDNAME}{{weakly-quasi-convex}\xspace}
\newcommand{\ball}{\mathcal{B}}
\let\epsilon=\varepsilon
\numberwithin{equation}{section}
\newcommand\MYcurrentlabel{xxx}
\newcommand{\MYstore}[2]{%
	\global\expandafter \def \csname MYMEMORY #1 \endcsname{#2}%
}
\newcommand{\MYload}[1]{%
	\csname MYMEMORY #1 \endcsname%
}
\newcommand{\MYnewlabel}[1]{%
	\renewcommand\MYcurrentlabel{#1}%
	\MYoldlabel{#1}%
}
\newcommand{\MYdummylabel}[1]{}
\newcommand{\torestate}[1]{%
	\let\MYoldlabel\label%
	\let\label\MYnewlabel%
	#1%
	\MYstore{\MYcurrentlabel}{#1}%
	\let\label\MYoldlabel%
}
\newcommand{\restatetheorem}[1]{%
	\let\MYoldlabel\label
	\let\label\MYdummylabel
	\begin{theorem*}[Restatement of \prettyref{#1}]
		\MYload{#1}
	\end{theorem*}
	\let\label\MYoldlabel
}
\newcommand{\restatelemma}[1]{%
	\let\MYoldlabel\label
	\let\label\MYdummylabel
	\begin{lemma*}[Restatement of \prettyref{#1}]
		\MYload{#1}
	\end{lemma*}
	\let\label\MYoldlabel
}
\newcommand{\restateprop}[1]{%
	\let\MYoldlabel\label
	\let\label\MYdummylabel
	\begin{proposition*}[Restatement of \prettyref{#1}]
		\MYload{#1}
	\end{proposition*}
	\let\label\MYoldlabel
}
\newcommand{\restatefact}[1]{%
	\let\MYoldlabel\label
	\let\label\MYdummylabel
	\begin{fact*}[Restatement of \prettyref{#1}]
		\MYload{#1}
	\end{fact*}
	\let\label\MYoldlabel
}
\newcommand{\restate}[1]{%
	\let\MYoldlabel\label
	\let\label\MYdummylabel
	\MYload{#1}
	\let\label\MYoldlabel
}
\begin{document}

\maketitle

\renewcommand{\cite}{\citet}

\begin{abstract}
	\normalsize{
	Non-convex optimization is ubiquitous in modern machine learning. Researchers devise non-convex objective functions and optimize them using off-the-shelf optimizers such as stochastic gradient descent and its variants, which leverage the local geometry and update iteratively. Even though solving non-convex functions is NP-hard in the worst case, the optimization quality in practice is often not an issue---optimizers are largely believed to find approximate global minima. Researchers hypothesize a unified explanation for this intriguing phenomenon: most of the local minima of the practically-used objectives are approximately global minima. We rigorously formalize it for concrete instances of machine learning problems.\footnote{This is the Chapter 21 of the book Beyond the Worst-Case Analysis of Algorithms~\citep{roughgarden2020beyond}.}
}
\end{abstract}

\tableofcontents

\newcommand{\cost}{\text{cost}}

\newpage

\section{Introduction}

Optimizing non-convex functions has become the standard algorithmic technique in modern machine learning and
artificial intelligence. It is increasingly important to understand the working of the existing heuristics for optimizing non-convex functions, so that we can design more efficient optimizers with guarantees. The worst-case intractability result says that finding a global minimizer of a non-convex optimization problem --- or even just a degree-4 polynomial --- is NP-hard. Therefore, theoretical analysis with global guarantees has to depend on the special properties of the target functions that we optimize. To characterize the properties of the real-world objective functions, researchers have hypothesized that many objective functions for machine learning problems have the property that 
\begin{equation}\label{eqn:11}
	 \textup{all or most local minima are approximately global minima.}
\end{equation}
Optimizers based on local derivatives can solve this family of functions in polynomial time (under some additional technical assumptions that will discussed below). Empirical evidences also suggest practical objective functions from machine learning and deep learning may have such a property. In this chapter, we formally state the algorithmic result that local methods can solve objective with property~\eqref{eqn:11} in Section~\ref{sec:landscape}, and then rigorously prove that this  property holds for a few objectives arising from several key machine learning problems: generalized linear models (Section~\ref{sec:glm}), principal component analysis (Section~\ref{sec:pca}), matrix completion (Section~\ref{sec:matrix}), and tensor decompositions (Section~\ref{sec:tensor}). We will also briefly touch on recent works on neural networks (Section~\ref{sec:neural_net}). 

\section{Analysis Technique: Characterization of the Landscape}\label{sec:landscape}
In this section, we will show that a technical and stronger version of the property~\eqref{eqn:11} implies that many optimizers can converge to a global minimum of the objective function. \subsection{Convergence to a local minimum}

We consider a objective function $f$, which is assumed to be twice-differentiable from $\R^d$ to $\R$. Recall that $x$ is a \textit{local minimum} of $f(\cdot)$ if there exists an open neighborhood $N$ of $x$ in which the function value is at least $f(x)$: $\forall z\in N, f(z)\ge f(x)$. A point $x$ is a \textit{stationary point} if it satisfies $\nabla f(x) = 0$. A \textit{saddle point} is a stationary point that is not a local minimum or maximum. 
We use $\nabla f(x)$ to denote the gradient of the function, and $\nabla^2 f(x)$ to denote the Hessian of the function ($\nabla^2 f(x)$ is an $d\times d$ matrix where $[\nabla^2 f(x)]_{i,j} = \frac{\partial^2}{\partial x_i \partial x_j} f(x)$). 
A local minimum $x$ must satisfy the first order necessary condition for optimality, that is, $\nabla f(x) = 0$, and the second order necessary condition for optimality, that is, $\nabla^2 f(x) \succeq 0$. (Here $A\succeq 0$ denotes that $A$ is a positive semi-definite matrix.)
Thus, A local minimum is a stationary point, so is a global minimum. 

However, $\nabla f(x) = 0$ and $\nabla^2 f(x)\succeq 0$ is not a sufficient condition for being a local minimum. For example, the original is not a local minimum of the function $f(x_1,x_2) = x_1^2 + x_2^3$ even though $\nabla f(0) =0$ and $\nabla^3 f(0) \succeq 0$. Generally speaking, along those direction $v$ where the Hessian vanishes (that is, $v^\top \nabla^2 f(x)v = 0$), the higher-order derivatives start to matter to the local optimality. In fact, finding a local minimum of a function is NP-hard~\citep{HillarL13}. 

Fortunately, with the following strict-saddle assumption, we can efficiently find a local minimum of the function $f$.  A strict-saddle function satisfies that every saddle point must have a strictly negative curvature in some direction. It assumes away the difficult situation in the example above where higher-order derivatives are needed to decide if a point is a local minimum.

\begin{definition}\label{def:strictsaddle}
 For $\alpha, \beta , \gamma \ge 0$, we say $f$ is $(\alpha,\beta,\gamma)$-\textit{strict saddle} if every $x\in \R^d$ satisfies \textit{at least} one of the following three conditions: \\
	1. $\norm{\nabla f(x)}_2 \ge \alpha$. \\
	2. $\lambda_{\min}(\nabla^2 f) \le -\beta$. \\
	3. There exists a local minimum $x^{\star}$ that is $\gamma$-close to $x$ in Euclidean distance.
\end{definition}

\begin{figure}
	\centering
	\begin{minipage}{.45\textwidth}
		\centering
		\includegraphics[height=4cm]{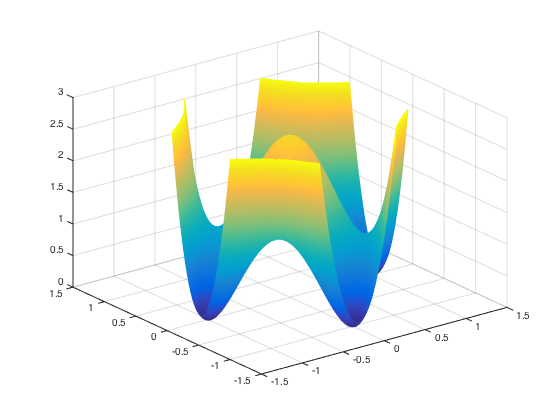}
	\end{minipage}
	~~~~
	\begin{minipage}{.5\textwidth}
		\centering
		\captionof{figure}{A two-dimensional function with the property that all local minima are global minima. It also satisfies the strict-saddle condition because all the saddle points have a trictly negative curvature in some direction. }
		\label{fig:wordvectors}
	\end{minipage}	
\end{figure}

This condition is conjectured to hold for many real-world functions, and will be proved to hold for various problems concretely. However, in general, verifying it mathematically or empirically  may be difficult. Under this condition, many algorithms can converge to a local minimum of $f$ in polynomial time as stated below.\footnote{Note that in this chapter, we only require polynomial time algorithm to be polynomial in $1/\eps$ when $\eps$ is the error. This makes sense for the downstream machine learning applications because very high accuracy solutions are not necessary due to intrinsic statistical errors.}

\begin{theorem}\label{thm:localmin}
Suppose $f$ is a twice differentiable $(\alpha,\beta,\gamma)$-strict saddle function from $\R^d\rightarrow \R$. 
Then, various optimization algorithms (such as stochastic gradient descent) can converge to a local minimum with $\epsilon$ error in Euclidean distance in time $\mbox{poly}(d, 1/\alpha, 1/\beta, 1/\gamma, 1/\epsilon)$.   
\end{theorem}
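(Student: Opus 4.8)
The plan is to reduce the theorem to the now-standard task of efficiently finding an \emph{approximate second-order stationary point} and then read off the conclusion from the strict-saddle hypothesis. Call $x$ an $(\alpha,\beta)$-approximate second-order stationary point (SOSP) if $\norm{\nabla f(x)}_2 < \alpha$ and $\lambda_{\min}(\nabla^2 f(x)) > -\beta$. The contrapositive of Definition~\ref{def:strictsaddle} says precisely that any $(\alpha,\beta)$-approximate SOSP falls into case~3, i.e.\ it is $\gamma$-close to an actual local minimum. So it is enough to show that a local-search algorithm reaches such a point in $\mathrm{poly}(d,1/\alpha,1/\beta)$ steps (with the remaining dependencies entering in a final cleanup phase). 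Throughout I would adopt the standard auxiliary regularity assumptions implicit in ``various optimization algorithms $\ldots$ can converge'': $f$ is bounded below, $\nabla f$ is $L$-Lipschitz, and $\nabla^2 f$ is $\rho$-Lipschitz.

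First I would analyze perturbed gradient descent $x_{t+1} = x_t - \eta\,\nabla f(x_t)$ with step size $\eta \asymp 1/L$, where on iterations at which the gradient is small we add a small isotropic random perturbation (the inherent noise of SGD can play the same role). Fix an iterate $x_t$ and split into cases according to which clause of the strict-saddle property fails to certify near-optimality. If $\norm{\nabla f(x_t)}_2 \ge \alpha$, the descent lemma gives $f(x_{t+1}) \le f(x_t) - \Omega(\eta\alpha^2)$. If instead $\norm{\nabla f(x_t)}_2 < \alpha$ but $\lambda_{\min}(\nabla^2 f(x_t)) \le -\beta$, then $x_t$ is a strict saddle; after perturbing and running $T \asymp (\eta\beta)^{-1}\log(\cdot)$ further steps one shows that, with at least constant probability, $f$ decreases by a fixed polynomial amount $\Omega(\beta^3/\rho^2)$ (up to the usual logarithmic factors). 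The clean way to establish this escape bound is the coupling / ``improve-or-localize'' argument: run two perturbed trajectories whose initial perturbations differ only in the sign of their component along the most negative eigenvector of $\nabla^2 f(x_t)$, show that this component is amplified geometrically at rate roughly $1+\eta\beta$ while the Hessian-Lipschitz bound keeps the quadratic approximation valid, and conclude that at least one of the two trajectories must have left the neighborhood of $x_t$ --- and hence decreased $f$ by the claimed amount --- within $T$ steps.

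Next I would assemble these local estimates into a global bound by a potential (amortization) argument on $f$ itself. Every iteration at which $x_t$ is in case~(i) or case~(ii) decreases $f$ in expectation by at least a fixed quantity $\delta = \mathrm{poly}(\alpha,\beta,1/L,1/\rho)$; since $f$ is bounded below, such iterations can occur at most $(f(x_0) - \inf f)/\delta$ times. Therefore within $\mathrm{poly}(d, 1/\alpha, 1/\beta, L, \rho)$ iterations, with high probability we hit an iterate that is in neither bad case, i.e.\ an $(\alpha,\beta)$-approximate SOSP, which by the reduction above is $\gamma$-close to a local minimum $x^\star$. Finally, to output a point within Euclidean distance $\epsilon$ of a local minimum (rather than merely $\gamma$-close), run a short additional phase of gradient descent: once the iterate is inside a small enough ball around $x^\star$, standard local-convergence estimates (using that $\nabla^2 f$ is $\rho$-Lipschitz and, after possibly shrinking the target $\beta$, roughly PSD near $x^\star$) contract the distance to $x^\star$ geometrically, contributing the $\log(1/\epsilon)$ (or at worst $\mathrm{poly}(1/\epsilon)$) and the $1/\gamma$ factors.

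The one genuinely delicate step is case~(ii) of the second paragraph --- turning the mere existence of a $-\beta$ curvature direction at a near-stationary point into a \emph{guaranteed} $\mathrm{poly}(1/\beta)$-step decrease of $f$ by a random perturbation whose effect is tracked only through the Hessian-Lipschitz constant. This is exactly the technical core of the perturbed-gradient-descent and noisy-SGD analyses in the literature, and everything else (the descent lemma, the amortization, the final local phase) is routine by comparison; I would therefore spend the bulk of the effort making the two-trajectory coupling bound precise and its probability-of-success and function-decrease constants explicit in terms of $\alpha,\beta,L,\rho$.
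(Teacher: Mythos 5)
The paper never actually proves Theorem~\ref{thm:localmin}: it is stated informally and the Notes section simply points to \citet{ge2015escaping}, \citet{agarwal2017finding}, \citet{nesterov2006cubic}, etc.\ for formal versions. Your sketch is a faithful reconstruction of the standard argument in exactly those references (descent lemma when $\norm{\nabla f(x_t)}_2\ge\alpha$; perturbation plus a two-trajectory coupling to extract an $\Omega(\mathrm{poly}(\beta,1/\rho))$ decrease when $\lambda_{\min}(\nabla^2 f(x_t))\le-\beta$; amortization against $f(x_0)-\inf f$), and you correctly identify that the informal statement silently assumes $f$ bounded below with Lipschitz gradient and Hessian. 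So for the core claim --- reaching a point that violates clauses 1 and 2 of Definition~\ref{def:strictsaddle}, hence lands in clause 3 and is $\gamma$-close to a local minimum --- your argument is sound and is the intended one.

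The one place your proposal does not close is the final cleanup phase. Definition~\ref{def:strictsaddle} as stated in this chapter only guarantees the \emph{existence} of a local minimum $x^\star$ within distance $\gamma$; unlike the original strict-saddle definition of \citet{ge2015escaping}, it does not require $f$ to be locally strongly convex (or even to have a positive-definite Hessian) in a neighborhood of $x^\star$. Consequently ``roughly PSD near $x^\star$'' is not enough to make gradient descent contract $\norm{x_t-x^\star}_2$ geometrically --- a flat valley of local minima is consistent with the definition, and there the iterates need not approach any particular $x^\star$ faster than the gradient signal allows. As written, your argument delivers a point within distance $\gamma$ of some local minimum, i.e.\ accuracy $\max(\epsilon,\gamma)$, not $\epsilon$. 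To get genuine $\epsilon$-accuracy for $\epsilon\ll\gamma$ you must either import the local-strong-convexity clause from the original definition (in which case your geometric-contraction phase goes through verbatim), or reinterpret the theorem's error parameter as measuring distance to the \emph{set} of points satisfying clause 3. This is really a looseness in the informal statement rather than an error in your approach, but a complete write-up has to pick one of these two resolutions explicitly.
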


\subsection{Local optimality vs global optimality }

If a function $f$ satisfies the property that ``all local minima are global'' and the strict saddle property, we can provably find one of its global minima. (See Figure~\ref{fig:wordvectors} for an example of functions with this property. )

\begin{theorem}\label{thm:global}Suppose $f$ satisfies ``all local minima are global'' and the strict saddle property in a sense that all points satisfying approximately the necessary first order and second order optimality condition should be close to a global minimum: 
	
there exist $\epsilon_0, \tau_0 > 0$ and a universal constant $c> 0$ such that if a point  $x$ satisfies $\norm{\nabla f(x)}_2\le \epsilon\le \epsilon_0$ and $\nabla^2 f(x)\succeq -\tau_0\cdot \Id$, then $x$ is $\epsilon^c$-close to a global minimum of $f$.

Then, many optimization algorithms (including stochastic gradient descent and cubic regularization) can find a global minimum of $f$ up to $\delta$ error in $\ell_2$ norm in domain in time $\poly(1/\delta,1/\tau_0, d)$.
\end{theorem}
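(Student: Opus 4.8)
The hypothesis is doing the heavy lifting: it already converts a purely \emph{local} certificate at a point $x$ --- namely $\|\nabla f(x)\|_2 \le \epsilon \le \epsilon_0$ together with $\nabla^2 f(x)\succeq -\tau_0\Id$ --- into the \emph{global} conclusion that $x$ lies within $\epsilon^{c}$ of a global minimizer. So the plan is simply to (i) run an off-the-shelf local-search algorithm that is guaranteed to compute such an approximate second-order stationary point in time polynomial in $d$ and in the inverse accuracy parameters, and then (ii) choose those accuracy parameters small enough that the hypothesis delivers a point within $\delta$ of a global minimum.

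For step (i) I would invoke any of the now-standard routines whose convergence to approximate second-order stationary points is already established under mild smoothness: perturbed (stochastic) gradient descent, which provably escapes strict saddle points, or cubic-regularized Newton and its Hessian-free accelerated variants. Under the regularity conditions that are implicit throughout this section --- $f$ has a Lipschitz gradient (constant $\beta$), a Lipschitz Hessian (constant $\rho$), and a bounded initial optimality gap $f(x_0)-\inf f$ --- each of these algorithms, when given target tolerances $\epsilon_g,\epsilon_H>0$, returns a point $x$ with $\|\nabla f(x)\|_2\le\epsilon_g$ and $\nabla^2 f(x)\succeq -\epsilon_H\Id$ after $\poly(d,\,1/\epsilon_g,\,1/\epsilon_H,\,\beta,\,\rho,\,f(x_0)-\inf f)$ steps; for the randomized variants this holds with high probability, which can be boosted by independent repetition together with checking the (cheap) approximate first-order condition.

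For step (ii), fix the Hessian tolerance at the constant level $\epsilon_H=\tau_0$ and set the gradient tolerance to $\epsilon_g=\min\{\epsilon_0,\ \delta^{1/c}\}$. The algorithm of step (i) then produces a point $x$ with $\|\nabla f(x)\|_2\le\epsilon_g\le\epsilon_0$ and $\nabla^2 f(x)\succeq -\tau_0\Id$, so by the hypothesis $x$ is $\epsilon_g^{\,c}$-close to some global minimizer $x^\star$ of $f$; the choice $\epsilon_g\le\delta^{1/c}$ then gives $\|x-x^\star\|_2\le\epsilon_g^{\,c}\le\delta$. Because $c$ is a universal constant, $1/\epsilon_g\le\max\{1/\epsilon_0,\ \delta^{-1/c}\}=\poly(1/\delta)$, so the total running time is $\poly(1/\delta,\,1/\tau_0,\,d)$ once the problem-dependent smoothness constants are absorbed into the polynomial, as claimed.

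The substance of the argument therefore lies entirely in the black-box convergence guarantees used in step (i); the only points requiring care are bookkeeping ones. First, one must be able to drive the smallest Hessian eigenvalue down to the \emph{fixed} threshold $-\tau_0$, not merely to an $\epsilon$-dependent one: this is why I decouple $\epsilon_H$ from $\epsilon_g$ above, and if one's routine instead ties them together (e.g.\ $\epsilon_H=\sqrt{\rho\epsilon_g}$) it suffices to additionally require $\epsilon_g\le\tau_0^2/\rho$, which only shrinks $\epsilon_g$ by a constant factor. Second --- and this is the real ``obstacle'' --- these convergence theorems need the smoothness hypotheses to be in force, so the statement should be understood as implicitly carrying those assumptions, exactly as Theorem~\ref{thm:localmin} does for strict-saddle functions.
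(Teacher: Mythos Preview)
Your proposal is correct and is essentially the intended argument: reduce to a black-box guarantee for finding an approximate second-order stationary point, then invoke the hypothesis with appropriately chosen tolerances. The paper itself does not give a proof of this theorem; it simply states the result and, in the Notes section, defers to \citet{nesterov2006cubic, ge2015escaping, agarwal2017finding, carmon2016accelerated, sun2015nonconvex} for formal versions, which establish exactly the type of second-order stationary-point guarantees you invoke in step (i).
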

The technical condition of the theorem is often succinctly referred to as ``all local minima are global'', but its precise form, which is a combination of ``all local minima are global'' and the strict saddle condition,  is crucial. There are functions that satisfy ``all local minima are global'' but cannot be optimized efficiently. Ignoring the strict saddle condition may lead to misleadingly strong statements.

The condition of Theorem~\ref{thm:global} can be replaced by stronger ones which may occasionally be easier to verify, if they are indeed true for the functions of interests.  One of such conditions is that ``any stationary point is a global minimum." The gradient descent is known to converge to a global minimum linearly, as stated below. However, because this condition effectively rules out the existence of multiple disconnected local minima, it can't hold for many objective functions related to neural networks, which guarantees to have multiple local minima and stationary points due to a certain symmetry. 

\begin{theorem}\label{thm:quasi}
	Suppose a function $f$ has $L$-Lipschitz continuous gradients and satisfies the Polyak-Lojasiewicz condition: $\exists$ $\mu > 0$ and $\xstar$ such that for every $x$, 
	\begin{align}
	\Norm{\nabla f(x)}_2^2 \ge \mu(f(x)-f(\xstar))\ge 0 \mper\label{eqn:pl}
	\end{align}
	Then, the errors of the gradient descent with step size less than $1/(2L)$ decays geometrically. 
\end{theorem}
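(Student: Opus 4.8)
The plan is to run the classical ``descent lemma plus gradient domination'' argument, tracking the suboptimality gap $\delta_t := f(x_t) - f(\xstar)$ along the gradient descent trajectory $x_{t+1} = x_t - \eta\,\nabla f(x_t)$ with step size $\eta < 1/(2L)$. First I would invoke $L$-smoothness in the form of the descent lemma $f(y) \le f(x) + \langle \nabla f(x), y-x\rangle + \frac{L}{2}\Norm{y-x}_2^2$, which follows from $L$-Lipschitzness of $\nabla f$ by integrating the gradient along the segment $[x,y]$. Substituting $y = x_{t+1}$, $x = x_t$ and using the update rule gives the per-step decrease
\[
f(x_{t+1}) \le f(x_t) - \eta\Paren{1 - \tfrac{L\eta}{2}}\Norm{\nabla f(x_t)}_2^2 ,
\]
and since $\eta < 1/(2L) < 1/L$ the coefficient $\eta(1 - L\eta/2)$ is strictly positive, so $f$ decreases monotonically along the iterates.

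Next I would feed in the Polyak-\L ojasiewicz inequality \eqref{eqn:pl}, which lower bounds $\Norm{\nabla f(x_t)}_2^2$ by $\mu\,\delta_t$. Plugging this into the displayed inequality and subtracting $f(\xstar)$ from both sides yields the one-step contraction $\delta_{t+1} \le \rho\,\delta_t$ with $\rho := 1 - \mu\,\eta(1 - L\eta/2)$. Iterating this recursion gives $\delta_t \le \rho^{\,t}\,\delta_0$, i.e.\ geometric decay of the function-value error, provided the contraction factor satisfies $\rho \in [0,1)$.

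The only point that needs a moment's care is checking $\rho\in[0,1)$. That $1-\rho > 0$ is immediate since $\mu, \eta > 0$ and $\eta < 1/L$. For $\rho \ge 0$ I would first record an auxiliary consequence of smoothness alone: applying the descent lemma to the point $x - \frac{1}{L}\nabla f(x)$ gives $f(\xstar) \le f(x) - \frac{1}{2L}\Norm{\nabla f(x)}_2^2$, hence $\Norm{\nabla f(x)}_2^2 \le 2L\,(f(x)-f(\xstar))$ for every $x$; comparing with \eqref{eqn:pl} forces $\mu \le 2L$ (the case $f\equiv f(\xstar)$ being trivial). Since $\eta\mapsto\eta(1-L\eta/2)$ is increasing on $(0,1/L)$, for $\eta < 1/(2L)$ we have $\eta(1-L\eta/2) < \frac{3}{8L}$, so $\mu\,\eta(1-L\eta/2) < \frac{3\mu}{8L} \le \frac34$, and therefore $\rho \in (1/4, 1)$. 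I do not expect a genuine obstacle here: the argument is short, the induction on $\delta_t \le \rho^t\delta_0$ is routine, and the proof uses neither convexity nor uniqueness of the minimizer—only $L$-smoothness and the PL gradient-domination bound.
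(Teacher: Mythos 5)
Your proof is correct and is exactly the standard descent-lemma-plus-gradient-domination argument; the paper itself does not prove Theorem~\ref{thm:quasi} but defers to \citet{karimi2016linear}, whose proof is the one you reproduce. Your extra care in verifying $\rho\ge 0$ via the smoothness bound $\Norm{\nabla f(x)}_2^2\le 2L\,(f(x)-f(\xstar))$ (hence $\mu\le 2L$) is a nice touch that the usual one-line presentations omit.
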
 

 It can be challenging to verify  the Polyak-Lojasiewicz condition because the quantity $\Norm{\nabla{f(x)}}_2^2$ is often a complex function of $x$. An easier-to-verify but stronger condition is the quasi-convexity. Intuitively speaking, quasi-convexity says that at any point~$x$ the gradient should be negatively correlated with the direction~$x^*-x$ pointing towards the optimum.

\renewcommand{\CONDNAME}{{weakly-quasi-convex}}

\begin{definition}[Weak quasi-convexity]\label{def:cond}
	We say an objective function $f$ is \textit{$\tau$-\CONDNAME} 
	over a domain~$\ball$ with respect to the global minimum $x^*$ if there is a
	positive constant $\tau>0$ such that for all $x\in \ball$, 
	\begin{equation}
	\nabla f(x)^{\top} (x-x^*) \ge \tau (f(x)-f(x^*))\,.\label{eqn:condition}
	\end{equation}
The following one is another related condition, which is sometimes referred to as the restricted secant inequality (RSI):
		\begin{equation}
	\nabla f(x)^{\top} (x-x^*) \ge \tau \|x-x^*\|_2^2.\label{eqn:msi}
	\end{equation}	
\end{definition}
We note that convex functions satisfy~\eqref{eqn:condition} with $\tau =1$. Condition~\eqref{eqn:msi} is stronger than~\eqref{eqn:condition} because for smooth function, we have $\|x-x^*\|_2^2 \ge L (f(x) - f(x^*))$ for some constant $L$.\footnote{Readers who are familiar with convex optimization may realize that condition~\eqref{eqn:msi} is an extension of the strong convexity.} Conditions~\eqref{eqn:pl},~\eqref{eqn:condition}, and~\eqref{eqn:msi} all imply that all stationary points are global minimum because $\nabla f(x) = 0$ implies that $f(x) = f(x^*)$ or $x = x^*$.

\subsection{Landscape for manifold-constrained optimization}\label{sec:manifold}

We can extend many of the results in the previous section to the setting of constrained optimization over a smooth manifold. This section is only useful for problems in Section~\ref{sec:tensor} and casual readers can feel free to skip it. 

Let $\cM$ be a Riemannian  manifold.  Let $\tangent_x \cM$ be the tangent space to $\cM$ at $x$, and let $P_x$ be the projection operator to the tangent space $\tangent_x \cM$. Let  $\grad f(x)\in \tangent_x \cM$ be the gradient of $f$ at $x$ on $\cM$ and $\hessian f(x)$ be the Riemannian Hessian. Note that $\hessian f(x)$ is a linear mapping from $\tangent_x \cM$ onto itself. 
\begin{theorem}[Informally stated]\label{thm:manifold}
	\sloppy Consider the constrained optimization problem 
$
	\min_{x\sim \cM} f(x)
$. 
	Under proper regularity conditions, Theorem~\ref{thm:localmin} and Theorem~\ref{thm:global} still hold when replacing $\nabla f$ and $\nabla^2 f$ by $\grad f$ and $\hessian f$, respectively.
\end{theorem}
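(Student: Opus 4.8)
The plan is to reduce the Riemannian problem to the Euclidean results of Theorem~\ref{thm:localmin} and Theorem~\ref{thm:global} by working in the tangent spaces through a retraction. Fix a smooth retraction $R$, so that for each $x\in\cM$ the map $R_x:\tangent_x\cM\to\cM$ is a local diffeomorphism with $R_x(0)=x$ and differential equal to the identity at $0$ (the exponential map is one such choice). Define the pullback $\hatf_x(v)=f(R_x(v))$ for $v$ in a ball of $\tangent_x\cM$, a function on a Euclidean space. The first basic fact is that $\nabla\hatf_x(0)=\grad f(x)$ for any retraction, and $\nabla^2\hatf_x(0)=\hessian f(x)$ when $R$ is a second-order retraction (again the exponential map qualifies). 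Consequently, the $(\alpha,\beta,\gamma)$-strict-saddle hypothesis stated for $\grad f$ and $\hessian f$ is exactly the Euclidean strict-saddle hypothesis for the family $\{\hatf_x\}$ at the origin, and the hypothesis of Theorem~\ref{thm:global} on approximate Riemannian first- and second-order stationary points transfers verbatim.

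Next I would run the Euclidean algorithms chart by chart: at the current iterate $x_t$, take one (stochastic) gradient or cubic-regularization step for $\hatf_{x_t}$ in $\tangent_{x_t}\cM$ to obtain $v_t$, and set $x_{t+1}=R_{x_t}(v_t)$. The descent lemmas and saddle-escaping arguments behind Theorem~\ref{thm:localmin} use only (i) a bound on how much $\hatf_{x_t}$ deviates from its second-order Taylor expansion at $0$ along short steps, and (ii) a lower bound on the step length that still makes progress. The ``proper regularity conditions'' are precisely what makes these uniform in $x$: $\cM$ is a complete (ideally compact, or we restrict to a compact sublevel set) manifold of bounded sectional curvature and positive injectivity radius, $f$ is $C^2$ with Lipschitz Riemannian Hessian, and the retraction curves have uniformly bounded acceleration. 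Under these assumptions there are constants $\ell,L$ so that each $\hatf_x$ has $\ell$-Lipschitz gradient and $L$-Lipschitz Hessian on a fixed-radius ball, uniformly in $x$, and successive retractions distort distances by at most a constant factor, so the Euclidean analysis applies with constants degraded only polynomially.

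For the global statement I would then assemble the two pieces exactly as in the Euclidean case: the Riemannian version of Theorem~\ref{thm:localmin} produces in $\poly(\cdot)$ time a point $x$ with $\norm{\grad f(x)}_2\le\epsilon$ and $\hessian f(x)\succeq-\tau_0\Id$; the transferred hypothesis of Theorem~\ref{thm:global} then says $x$ is $\epsilon^c$-close (in Riemannian distance, comparable to the ambient distance on a compact manifold) to a global minimizer; choosing $\epsilon$ a suitable polynomial in $\delta$ finishes.

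I expect the main obstacle to be the chart-by-chart step: controlling the retraction error uniformly. A step of length $r$ in the tangent space maps to a curve on $\cM$, and both the value and the gradient of $f$ along that curve differ from their quadratic and linear models by terms governed by the curvature of $\cM$ and the acceleration of the retraction curve. Making the saddle-escaping argument robust to these $O(r^2)$ and $O(r^3)$ perturbations — in particular, ensuring a negative-curvature direction of $\hessian f(x)$ remains a genuine descent direction after retracting, and that the guaranteed decrease per escape is still bounded below by a fixed polynomial quantity — is where the real work lies; this is essentially the content of the manifold saddle-escaping analyses in the literature, and the regularity hypotheses are chosen exactly so that it goes through.
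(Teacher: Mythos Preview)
The paper does not actually prove Theorem~\ref{thm:manifold}. It is labelled ``Informally stated,'' and in the Notes section the authors simply attribute it to~\citet[Theorem 12]{2016arXiv160508101B}; no argument is given in the text. So there is no ``paper's own proof'' to compare your proposal against.

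That said, your sketch is the right shape for how such results are established in the manifold-optimization literature: pull back to the tangent space via a (second-order) retraction so that $\nabla\hatf_x(0)=\grad f(x)$ and $\nabla^2\hatf_x(0)=\hessian f(x)$, impose regularity (Lipschitz Riemannian Hessian, bounded curvature/injectivity radius, bounded retraction acceleration) to get uniform Euclidean smoothness of the pullbacks, and then rerun the Euclidean descent and saddle-escaping analysis step by step. This is essentially the strategy in the reference the paper cites. Your identification of the main technical obstacle --- controlling the retraction error uniformly so that negative-curvature escape still yields a fixed polynomial decrease --- is accurate, and is exactly where the cited work spends its effort. For the purposes of this chapter, though, all of this is outsourced; you are not missing an argument that the paper provides.
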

\noindent\textit{Backgrounds on manifold gradient and Hessian.} Later in Section~\ref{sec:tensor}, the unit sphere in $d$-dimensional space will be our constraint set, that is, $\cM =\sphere^{d-1}$. We provide some further backgrounds on how to compute the manifold gradients and Hessian here. 
We view $f$ as the restriction of a smooth function $\bar{f}$ to the manifold $\cM$. In this case, we have  $\tangent_x \cM= \{z\in \R^d: z^{\top} x = 0\}$,  and $P_x = \Id - xx^{\top}$.  We derive the manifold gradient of $f$ on $\cM$:  
$
\grad f(x) = P_x \nabla\bar{f}(x)\mcom\label{eqn:mani-gradient}
$
where $\nabla$ is the usual gradient in the ambient space $\R^d$. Moreover, we derive the Riemannian Hessian as
$
\hessian f(x)   = P_x \nabla^2 \bar{f}(x) P_x - (x^\top \nabla \bar{f}(x)) P_x\label{eqn:mani-hessian}. 
$

\section{Generalized Linear Models}\label{sec:glm}
\newcommand{\hatL}{\widehat{L}}

\newcommand{\<}{\langle}
\renewcommand{\>}{\rangle}
We consider the problem of learning a \emph{generalized linear model} and we will show that the loss function for it will be non-convex, but all of its local minima are global. Suppose we observe $n$ data points $\{(x_i,y_i)\}_{i=1}^n$, where $x_i$'s are sampled i.i.d. from some distribution $D_x$ over $\R^d$. In the generalized linear model, we assume the label $y_i\in \R$ is generated from 
\begin{equation*}
y_i = \sigma(w_\star^\top x_i) + \epsilon_i, \label{eqn:generative}
\end{equation*}
where $\sigma:\R\to\R$ is a known monotone activation function, $\epsilon_i\in\R$ are i.i.d. mean-zero noise (independent with $x_i$), and $w_\star\in\R^d$ is a fixed unknown ground truth coefficient vector. We denote the joint distribution of $(x_i,y_i)$ by $D$. 

Our goal is to recover approximately $w_{\star}$ from the data. We minimize the empirical squared risk:
$
\hatL(w) = \frac{1}{2n}\sum_{i=1}^n (y_i - \sigma(w^\top x_i))^2\mper
$
Let $L(w)$ be the corresponding population risk: 
$L(w) = \frac{1}{2}\Exp_{(x,y)\sim D}\left[(y - \sigma(w^\top x))^2\right].$

We will analyze the optimization of $\widehat{L}$ via characterizing the property of its landscape. Our road map consists of two parts:  a) all the local minima of the population risk are global minima; b) the empirical risk $\widehat{L}$ has the same property.   

When $\sigma$ is the identity function, that is, $\sigma(t) =t$, we have the linear regression problem and the loss function is convex. In practice, people have taken $\sigma$, e.g., to be the sigmoid function and then the objective $\hatL$ is no longer convex. 

Throughout the rest of the section, we make the following regularity assumptions on the problem. These assumptions are stronger than what's necessary, for the ease of exposition. However, we note that some assumptions on the data are necessary because in the worst-case, the problem is intractable. (E.g., the generative assumption~\eqref{eqn:generative} on $y_i$'s is a key one.)
\begin{assumption}\label{ass:1}
	We assume the distribution $D_x$ and activation $\sigma$ satisfy that
	\begin{itemize}
	\item[1.] The vectors $x_i$ are bounded and non-degenerate: $D_x$ is supported in $\{x:\|x\|_2 \le B\}$, and $\E_{x\sim D_x}[xx^\top] \succeq \lambda \Id$ for some $\lambda>0$, where $\Id$ is the identity.
	\item[2.] The ground truth coefficient vector satisfies $\|w_\star\|_2 \le R$, and $BR\ge 1$.
	\item[3.] The activation function $\sigma$ is strictly increasing and twice differentiable. Furthermore, it satisfies the bounds
	\begin{equation*}
	\sigma(t)\in[0,1],~~~\sup_{t\in\R}\left\{|\sigma'(t)|, |\sigma''(t)| \right\}\le 1,~~~{\rm and}~~~\inf_{t\in[-BR, BR]}\sigma'(t) \ge \gamma > 0.
	\end{equation*}
	\item[4.] The noise $\epsilon_i$'s are mean zero and bounded: with probability 1, we have $|\epsilon_i|\le 1$.
\end{itemize}
\end{assumption}

\subsection{Analysis of the population risk}

\newcommand{\newpart}[1]{{#1}}
\newcommand{\brief}[1]{{#1}}
\newcommand{\solution}[1]{{\color{red}#1}}

In this section, we show that all the local minima of the population risk $L(w)$ are global minima. In fact, $L(w)$ has a unique local minimum which is also global.  (But still, $L(w)$ may likely be not convex for many choices of  $\sigma$.)

\begin{theorem}The objective $L(\cdot)$ has a unique local minimum, which is equal to $w_{\star}$ and is also a global minimum. In particular, $L(\cdot)$ is weakly-quasi-convex.  \end{theorem}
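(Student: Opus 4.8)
The plan is to reduce the whole statement to one short computation with the population gradient, after peeling off the noise. First I would expand the square in $L(w)=\tfrac12\Exp_{(x,y)\sim D}[(y-\sigma(w^\top x))^2]$ using $y=\sigma(w_\star^\top x)+\epsilon$. Writing $\Delta(x):=\sigma(w^\top x)-\sigma(w_\star^\top x)$ and using that $\epsilon$ is mean-zero and independent of $x$, the cross term vanishes and
\[
L(w)=\tfrac12\Exp_{x\sim D_x}[\Delta(x)^2]+\tfrac12\Exp[\epsilon^2].
\]
The second term does not depend on $w$, so $L(w)-L(w_\star)=\tfrac12\Exp_x[\Delta(x)^2]\ge 0$, which shows $w_\star$ is a global minimizer. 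It is the \emph{unique} one: $L(w)=L(w_\star)$ forces $\Delta(x)=0$ for a.e.\ $x$, hence $w^\top x=w_\star^\top x$ a.e.\ by strict monotonicity of $\sigma$, hence $(w-w_\star)^\top\Exp[xx^\top](w-w_\star)=0$, hence $w=w_\star$ since $\Exp[xx^\top]\succeq\lambda\Id$.

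Next, for weak-quasi-convexity, I would differentiate: again the $\epsilon$-term drops by independence and mean-zero, leaving $\nabla L(w)=\Exp_x[\Delta(x)\,\sigma'(w^\top x)\,x]$. The key step is the mean value theorem applied to $\sigma$: $\Delta(x)=\sigma'(\xi_x)\,(w-w_\star)^\top x$ for some $\xi_x$ between $w_\star^\top x$ and $w^\top x$, so $(w-w_\star)^\top x=\Delta(x)/\sigma'(\xi_x)$, and therefore
\[
\nabla L(w)^\top(w-w_\star)=\Exp_x\!\left[\Delta(x)^2\,\frac{\sigma'(w^\top x)}{\sigma'(\xi_x)}\right].
\]
Restricting to the domain $\ball=\{w:\norm{w}_2\le R\}$, both $w^\top x$ and $w_\star^\top x$ lie in $[-BR,BR]$ (as $\norm{x}_2\le B$ and $\norm{w_\star}_2\le R$), so $\xi_x\in[-BR,BR]$ too; Assumption~\ref{ass:1} then gives $\sigma'(\xi_x)\le 1$ and $\sigma'(w^\top x)\ge\gamma$, so the ratio is $\ge\gamma$. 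Hence $\nabla L(w)^\top(w-w_\star)\ge\gamma\,\Exp_x[\Delta(x)^2]=2\gamma\,(L(w)-L(w_\star))$, i.e.\ $L$ is $2\gamma$-weakly-quasi-convex on $\ball$. Since weak-quasi-convexity makes every stationary point a global minimizer ($\nabla L(w)=0$ forces $L(w)=L(w_\star)$), and every local minimum is stationary, the unique local minimum of $L$ on $\ball$ is $w_\star$. As a variant, using $\Delta(x)(w-w_\star)^\top x=\sigma'(\xi_x)((w-w_\star)^\top x)^2\ge\gamma((w-w_\star)^\top x)^2$ one also gets the restricted secant inequality $\nabla L(w)^\top(w-w_\star)\ge\gamma^2\lambda\norm{w-w_\star}_2^2$, which pins down $w_\star$ as the unique stationary point even more directly.

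The computation is short; the delicate points are bookkeeping rather than ideas. The restriction to $\ball$ is essential, not cosmetic: it is exactly what keeps $w^\top x$ and $\xi_x$ inside $[-BR,BR]$, the interval on which $\sigma'\ge\gamma$ --- for a saturating activation such as the sigmoid, $\sigma'(w^\top x)$ genuinely decays as $\norm{w}_2$ grows, so off $\ball$ the ratio $\sigma'(w^\top x)/\sigma'(\xi_x)$ is not controllable. I expect the main (mild) obstacle to be whether one wants the statement on all of $\R^d$ or only on $\ball$: extending it globally would require separately excluding stationary points with $\norm{w}_2>R$, which does not follow from Assumption~\ref{ass:1} alone, so I would state and prove it over $\ball$ (no loss, since $\norm{w_\star}_2\le R$). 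The only other care needed is in the two degeneracy steps, where strict monotonicity of $\sigma$ and non-degeneracy $\Exp[xx^\top]\succeq\lambda\Id$ are both used.
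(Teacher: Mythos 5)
Your proposal is correct and follows essentially the same route as the paper: the bias--variance decomposition of $L$ to identify $w_\star$ as a global minimum, then the mean value theorem together with the bounds $\gamma\le\sigma'\le 1$ on $[-BR,BR]$ to establish $\nabla L(w)^\top(w-w_\star)\ge 2\gamma(L(w)-L(w_\star))$, i.e.\ weak quasi-convexity, which rules out any other stationary point. The extra care you take about restricting to $\norm{w}_2\le R$ and the explicit uniqueness argument via $\E[xx^\top]\succeq\lambda\Id$ are sound refinements of details the paper's sketch leaves implicit.
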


The proof follows from directly checking the definition of the quasi-convexity. The intuition is that generalized linear models behave very similarly to linear models from the lens of quasi-convexity: many steps of the inequalities of the proof involves replacing $\sigma$ be an identity function effectively (or replacing $\sigma'$ be 1.)
\begin{proof}[Proof Sketch]
Using the property that $\E[y|x]=\sigma(w_\star^\top x)$, we have the following bias-variance decomposition (which can be derived by elementary manipulation)
	\begin{equation}
	L(w) = \frac{1}{2}\E[(y - \sigma(w^\top x))^2] = \frac{1}{2}\E[(y - \sigma(w_\star^\top x))^2] + \frac{1}{2}\E[(\sigma(w_\star^\top x) - \sigma(w^\top x))^2]\mper\label{eqn:6}
	\end{equation}
	The first term is independent of $w$, and the second term is non-negative and equals zero at $w=w_\star$. Therefore, we see that $w_\star$ is a global minimum of $L(w)$.
	
	Towards proving that $L(\cdot)$ is quasi-convex, we first compute  $\nabla L(w)$: 	\begin{equation*}
	\nabla L(w) =  \E[(\sigma(w^\top x) - y)\sigma'(w^\top x)x] = \E[(\sigma(w^\top x) - \sigma(w_\star^\top x))\sigma'(w^\top x)x],
	\end{equation*}
	where the last equality used the fact that $\E[y|x]=\sigma(w_\star^\top x)$. It follows that	\begin{equation*}
	\<\nabla L(w), w - w_\star\> = \E[(\sigma(w^\top x) - \sigma(w_\star^\top x))\sigma'(w^\top x)\<w-w_\star, x\>].
	\end{equation*}
	Now, by the mean value theorem, 
	and bullet 3 of Assumption~\ref{ass:1}, we have that 		\begin{align*}
	(\sigma(w^\top x) - \sigma(w_\star^\top x))\<w-w_\star, x\>  \ge \gamma(w^\top x - w_\star^\top x)^2.
	\end{align*}
Using $|\sigma'(t)| \ge \gamma$ and $|\sigma'(t)| \le 1$ for every $|t|\le BR$, and the monotonicity of $\sigma$, 
	\begin{align}
	 \quad \<\nabla L(w), w - w_\star\> & = \E[(\sigma(w^\top x) - \sigma(w_\star^\top x))\sigma'(w^\top x)\<w-w_\star, x\>] \nonumber\\
	& \ge \gamma\E(\sigma(w^\top x) - \sigma(w_\star^\top x))(w^\top x - w_\star^\top x)] \label{eqn:5}\\
	& \ge \gamma \E[(\sigma(w^\top x) - \sigma(w_\star^\top x))^2] \nonumber\ge 2\gamma (L(w) - L(w_\star))\label{eqn:7}
	\end{align}
	where the last step uses the decomposition~\eqref{eqn:6} of the risk $L(w)$. 
\end{proof}

\subsection{Concentration of the empirical risk}

We next analyze the  empirical risk $\widehat{L}(w)$. 
We will show that with sufficiently many examples, the empirical risk $\hatL$ is close enough to the population risk $L$ so that $\hatL$ also satisfies that all local minima are global. 
\begin{theorem}[The empirical risk has no bad local minimum]
	\label{thm}
	Under the problem assumptions, 		with probability at least	$1-\delta$, 
for all $w$ with $\|w\|_2\le R$, 
	the
	empirical risk has no local minima outside a small 
	neighborhood of $w_\star$: for any $w$ such that $\|w\|_2\le R$,
	if $\nabla\widehat{L}(w)=0$, then
	\begin{equation*}
	\|w-w_\star\|_2 \le \frac{C_1B}{\gamma^2\lambda}
	\sqrt{\frac{d(C_2 + \log(nBR)) + \log\frac{1}{\delta}}{n}}.
	\end{equation*}
	where $C_1, C_2>0$ are universal constants that do not depend on
	$(B,R,d,n,\delta)$.
\end{theorem}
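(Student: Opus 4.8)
The plan is to combine a structural inequality for the population risk (already implicit in the population analysis) with a uniform deviation bound between $\nabla\widehat L$ and $\nabla L$ over the ball $\ball=\{w:\|w\|_2\le R\}$. First I would extract from the population proof the restricted–secant–type bound
\[
\langle\nabla L(w),\,w-w_\star\rangle\ \ge\ \gamma^2\lambda\,\|w-w_\star\|_2^2\qquad\text{for all }w\in\ball .
\]
This follows by running the same chain of inequalities as in the population analysis but stopping one step earlier: the mean value theorem together with $\inf_{|t|\le BR}\sigma'(t)\ge\gamma$ gives $(\sigma(w^\top x)-\sigma(w_\star^\top x))(w^\top x-w_\star^\top x)\ge\gamma\,(w^\top x-w_\star^\top x)^2$ (note $|w^\top x|,|w_\star^\top x|\le BR$), and then $\sigma'(w^\top x)\ge\gamma$ plus monotonicity yields $\langle\nabla L(w),w-w_\star\rangle\ge\gamma^2\,\E[(w^\top x-w_\star^\top x)^2]=\gamma^2(w-w_\star)^\top\E[xx^\top](w-w_\star)\ge\gamma^2\lambda\|w-w_\star\|_2^2$, using $\E[xx^\top]\succeq\lambda\Id$.

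Next, for any stationary point $w\in\ball$ of $\widehat L$ we have $\langle\nabla\widehat L(w),w-w_\star\rangle=0$, so by Cauchy–Schwarz
\[
\gamma^2\lambda\|w-w_\star\|_2^2\ \le\ \langle\nabla L(w)-\nabla\widehat L(w),\,w-w_\star\rangle\ \le\ \|\nabla L(w)-\nabla\widehat L(w)\|_2\cdot\|w-w_\star\|_2 ,
\]
hence $\|w-w_\star\|_2\le\frac{1}{\gamma^2\lambda}\,\sup_{w\in\ball}\|\nabla\widehat L(w)-\nabla L(w)\|_2$. It therefore suffices to show that with probability at least $1-\delta$ the uniform gradient deviation $\sup_{w\in\ball}\|\nabla\widehat L(w)-\nabla L(w)\|_2$ is at most $C_1B\sqrt{\big(d(C_2+\log(nBR))+\log(1/\delta)\big)/n}$.

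I would prove this uniform bound by an $\epsilon$-net argument. Write $\nabla\widehat L(w)=\frac1n\sum_i g_w(x_i,y_i)$ with $g_w(x,y)=(\sigma(w^\top x)-y)\sigma'(w^\top x)x$ and $\E[g_w]=\nabla L(w)$. Each summand is bounded, $\|g_w(x,y)\|_2\le 3B$ (since $\sigma\in[0,1]$, $|y|\le 2$, $|\sigma'|\le1$, $\|x\|_2\le B$), so a vector‑valued Hoeffding/Bernstein inequality gives, for each fixed $w$, $\|\nabla\widehat L(w)-\nabla L(w)\|_2\lesssim B\sqrt{\log(1/\delta')/n}$ with probability $1-\delta'$. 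Moreover $w\mapsto g_w$ is $O(B^2)$‑Lipschitz: $\partial_w g_w=\big([\sigma'(w^\top x)]^2+(\sigma(w^\top x)-y)\sigma''(w^\top x)\big)xx^\top$ has operator norm $\le 4B^2$, so both $\nabla\widehat L$ and $\nabla L$ are $4B^2$‑Lipschitz on $\ball$. Take an $\epsilon$‑net of $\ball$ of size $(3R/\epsilon)^d$, union‑bound the pointwise estimate over it with $\delta'=\delta/(3R/\epsilon)^d$ so the $\log(1/\delta')$ becomes $d\log(3R/\epsilon)+\log(1/\delta)$, and bound the off‑net error by $8B^2\epsilon$. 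Choosing $\epsilon\asymp 1/(B\sqrt n)$ makes $8B^2\epsilon=O(B/\sqrt n)$ and $\log(3R/\epsilon)=O(\log(nBR))$, which gives exactly the claimed form; combining with the Cauchy–Schwarz reduction above proves the theorem.

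The main obstacle is the uniform concentration step: obtaining a deviation that scales like $B\sqrt{d\,\polylog(nBR)/n}$ with the correct logarithmic factors requires carefully balancing the net resolution against the Lipschitz constant, and one must use a genuinely vector‑valued tail bound (or control the $d$ coordinates separately and absorb an extra $\sqrt d$) rather than a scalar Hoeffding inequality. The remaining ingredients—the restricted secant inequality for $L$ and the reduction via $\langle\nabla\widehat L(w),w-w_\star\rangle=0$—are essentially bookkeeping on top of the population analysis already performed.
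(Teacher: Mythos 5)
Your proposal is correct and follows essentially the same route the paper sketches: establish the restricted secant inequality $\langle\nabla L(w),w-w_\star\rangle\ge\gamma^2\lambda\|w-w_\star\|_2^2$ from the population analysis, then reduce the claim to uniform concentration of $\nabla\widehat L$ around $\nabla L$ over the ball (the paper states this reduction as equation (3.4) and explicitly defers the concentration argument). Your $\epsilon$-net argument with the $4B^2$-Lipschitz bound on the gradient and a dimension-free vector-valued Hoeffding inequality correctly supplies the step the paper omits and yields the stated $\frac{C_1B}{\gamma^2\lambda}\sqrt{(d(C_2+\log(nBR))+\log(1/\delta))/n}$ rate.
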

Theorem~\ref{thm} shows that all stationary points of
$\widehat{L}(w)$ have to be within a small neighborhood of $w_\star$. Stronger landscape property can also be proved though: there is a unique local minimum in the neighborhood of $w_\star$. 
	
The main intuition is that to verify quasi-convexity or restricted secant inequality for $\hatL$,  it suffices to show that with high probability over the randomness of the data, $\forall w \textup{ with } \|w\|_2\le R$
\begin{align}
\<\nabla L(w), w - w_\star\> \approx \<\nabla \hatL(w), w - w_\star\>\mper\label{eqn:9}
\end{align}

Various tools to prove such concentration inequalities have been developed in statistical learning theory  and probability theory community, and a thorough exposition of them is beyond the scope of this chapter.

\section{Matrix Factorization Problems}\label{sec:matrix}

In this section, we will discuss the optimization landscape of two problems based on matrix factorization: principal component analysis (PCA) and matrix completion. The fundamental difference between them and the generalized linear models is that their objective functions have saddle points that are \textit{not} local minima or global minima. It means that the quasi-convexity condition or Polyak-Lojasiewicz condition does not hold for these objectives. Thus, we need more sophisticated techniques that can distinguish saddle points from local minima.  

\subsection{Principal Component Analysis}
\label{sec:pca}
One interpretation of PCA is approximating a matrix by its best low-rank approximation. Given a matrix $M\in \R^{d_1\times d_2}$, we aim to find its best rank-$r$ approximation (in either Frobenius norm or spectral norm).  For the ease of exposition, we take $r=1$ and assume $M$ to be symmetric positive semi-definite with dimension $d$ by $d$. In this case, the best rank-1 approximation has the form $xx^\top$ where $x\in \R^d$. 

There are many well-known algorithms for finding the low-rank factor $x$. We are particularly interested in the following non-convex program that directly minimizes the approximation error in Frobenius norm.  
\begin{align}
\min_x ~g(x) := \frac12\cdot \norm{ M-xx^{\top}}_F ^2 \mper \label{eqn:g}
\end{align}
We will prove that even though $g$ is not convex, all the local minima of $g$ are global. It also satisfies the strict saddle property (which we will not prove formally here). Therefore, local search algorithms can solve~\eqref{eqn:g} in polynomial time.\footnote{In fact, local methods can solve it very fast. See, e.g., \citet[Thereom 1.2]{li2017algorithmic}}

\begin{theorem}\label{thm:pca}
In the setting above, all the local minima of the objective function $g(x) $ are global minima.\footnote{The function $g$ also satisfies the $(\alpha,\beta,\gamma)$-strict-saddle property (Definition~\ref{def:strictsaddle}) with some $\alpha, \beta, \gamma > 0$ (that may depend on $M$) so that it satisfies the condition of Theorem~\ref{thm:global}. We skip the proof of this result for simplicity.}
\end{theorem}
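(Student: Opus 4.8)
The plan is to read off the first- and second-order optimality conditions of $g$ and match every local minimum with a best rank-one approximation of $M$. First I would expand the objective as $g(x) = \frac12\bigl(\norm{M}_F^2 - 2x^\top M x + \norm{x}_2^4\bigr)$, using $\langle M, xx^\top\rangle = x^\top M x$ and $\norm{xx^\top}_F^2 = \norm{x}_2^4$. Differentiating gives $\nabla g(x) = 2\bigl(\norm{x}_2^2\Id - M\bigr)x$ and $\nabla^2 g(x) = 2\norm{x}_2^2\Id + 4xx^\top - 2M$. Hence the stationary points are exactly $x=0$ together with the eigenvectors $x$ of $M$ satisfying $Mx = \norm{x}_2^2 x$; for a nonzero stationary point the associated eigenvalue is $\lambda = \norm{x}_2^2 > 0$.

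Next I would impose the second-order necessary condition $\nabla^2 g(x)\succeq 0$ at a local minimum $x$ and test it against a unit top eigenvector $u_1$ of $M$, with eigenvalue $\lambda_1 = \lambda_{\max}(M)\ge 0$ (nonnegative since $M$ is PSD). This yields $\norm{x}_2^2 + 2(x^\top u_1)^2 \ge \lambda_1$. If $x=0$, this forces $\lambda_1\le 0$, so $M=0$ and $x=0$ is already a global minimizer. If $x\ne 0$ with eigenvalue $\lambda=\norm{x}_2^2$, I would argue by contradiction: were $\lambda<\lambda_1$, then $x$ (an eigenvector for an eigenvalue distinct from $\lambda_1$) is orthogonal to the $\lambda_1$-eigenspace since $M$ is symmetric, so $x^\top u_1 = 0$ and the inequality collapses to $\lambda\ge\lambda_1$, a contradiction. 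Since $\lambda$ is an eigenvalue of $M$ it also cannot exceed $\lambda_1$, so $\lambda=\lambda_1$ and $x$ lies in the top eigenspace with $\norm{x}_2^2 = \lambda_1$.

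Finally I would conclude that $xx^\top = \lambda_1\,\hat u\hat u^\top$ with $\hat u = x/\norm{x}_2$ a unit top eigenvector, which is a best rank-one approximation of $M$ in Frobenius norm by Eckart--Young; moreover this optimal approximant lies in the PSD cone, so restricting the minimization to matrices of the form $xx^\top$ forfeits nothing. Evaluating there gives $g(x) = \frac12\bigl(\norm{M}_F^2 - \lambda_{\max}(M)^2\bigr)$, which is exactly $\min_x g(x)$; together with the degenerate case $M=0$ handled above, this shows every local minimum of $g$ is global.

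The main obstacle is the middle step: selecting the right Hessian test direction and running the orthogonality argument cleanly, in particular handling the case where $\lambda_{\max}(M)$ has multiplicity greater than one, so that the best rank-one approximation is not unique but all admissible $x$ attain the same Frobenius error. The gradient and Hessian computations, and the Eckart--Young comparison of values, are routine.
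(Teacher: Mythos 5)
Your proposal is correct and follows essentially the same route as the paper: characterize stationary points as eigenvectors of $M$ with eigenvalue $\norm{x}_2^2$, then test the second-order condition against a top eigenvector and use orthogonality of eigenvectors for distinct eigenvalues to rule out any non-top eigenvector. The only differences are cosmetic refinements—you treat $x=0$ explicitly and close with an Eckart--Young value comparison, where the paper simply notes that top eigenvectors are global minima.
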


Our analysis consists of two main steps: a) to characterize all the stationary points of the function $g$, which turn out to be the eigenvectors of $M$; b) to examine each of the stationary points and show that the only the top eigenvector(s) of $g$ can be a local minimum. Step b) implies the theorem because  the top eigenvectors are also global minima of $g$. We start with step a) with the following lemma.

\begin{lemma}\label{lem:pcastaionary}
	In the setting of Theorem~\ref{thm:pca}, all the stationary points of the objective $g()$ are the eigenvectors of $M$. Moreover, if $x$ is a stationary point, then $\|x\|_2^2$ is the eigenvalue corresponding to $x$. 
\end{lemma}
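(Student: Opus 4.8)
The plan is to compute the gradient of $g(x) = \frac12\norm{M - xx^\top}_F^2$ explicitly and then set it to zero. First I would expand $g(x) = \frac12\bigl(\norm{M}_F^2 - 2\langle M, xx^\top\rangle + \norm{xx^\top}_F^2\bigr) = \frac12\norm{M}_F^2 - x^\top M x + \frac12\norm{x}_2^4$, using symmetry of $M$ and the identity $\langle M, xx^\top\rangle = x^\top M x$ together with $\norm{xx^\top}_F^2 = (x^\top x)^2$. Differentiating term by term gives $\nabla g(x) = -2Mx + 2\norm{x}_2^2\, x$, i.e.
\begin{equation*}
\nabla g(x) = 2\bigl(xx^\top x - Mx\bigr).
\end{equation*}

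Next I would impose the stationarity condition $\nabla g(x) = 0$, which is equivalent to $Mx = \norm{x}_2^2\, x$. If $x = 0$ this is trivially satisfied (and $0$ is a degenerate ``eigenvector'' case, or one can note it is a stationary point that will be handled separately in step b); otherwise $x \neq 0$ and the equation $Mx = \norm{x}_2^2 x$ says precisely that $x$ is an eigenvector of $M$ with eigenvalue $\lambda = \norm{x}_2^2$. This immediately yields both conclusions of the lemma: every nonzero stationary point is an eigenvector, and its associated eigenvalue equals $\norm{x}_2^2$. (Since $M \succeq 0$, this is also consistent: $\norm{x}_2^2 \ge 0$.)

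Since everything here is a direct computation, there is no real obstacle — the only mild subtlety is bookkeeping around the point $x = 0$, which one should either explicitly flag as a stationary point (consistent with the statement if one allows the zero vector as a trivial case) or simply observe is covered because it satisfies $Mx = \norm{x}_2^2 x$ vacuously; it will be shown not to be a local minimum in step b) whenever $M \neq 0$. I would therefore present the proof as: (i) derive the gradient formula, (ii) rewrite $\nabla g(x)=0$ as the eigenvalue equation, (iii) read off that $\norm{x}_2^2$ is the eigenvalue. The whole argument is a few lines.
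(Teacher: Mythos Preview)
Your proof is correct and follows essentially the same route as the paper: compute $\nabla g(x)$, set it to zero, and read off the eigenvalue equation $Mx = \|x\|_2^2\, x$. Your explicit flagging of the $x=0$ case is a nice touch that the paper's proof glosses over, and your factor of $2$ in the gradient is actually the correct one (the paper's displayed formula drops it, which is immaterial for the stationarity argument).
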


\begin{proof}
	By elementary calculus, we have that 
	\begin{align}
	\nabla g(x) = -(M-xx^\top)x = \|x\|_2^2\cdot x - Mx
	\end{align}
	Therefore, if $x$ is a stationary point of $g$, then $Mx = \|x\|_2^2 \cdot x$, which implies that $x$ is an eigenvector of $M$ with eigenvalue equal to $\|x\|_2^2$. 
\end{proof}

Now we are ready to prove b) and the theorem. The key intuition is the following. Suppose we are at a point $x$ that is an eigenvector but not the top eigenvector, moving in either the top eigenvector direction $v_1$ or the direction of $-v_1$ will result in a second-order local improvement of the objective function. Therefore, $x$ cannot be a local minimum unless $x$ is a top eigenvector.
\begin{proof}[Proof of Theorem~\ref{thm:pca}]
	By Lemma~\ref{lem:pcastaionary}, we know that a local minimum $x$ is an eigenvector of $M$. If $x$ is a top eigenvector of $M$ with the largest eigenvalue, then $x$ is a global minimum. For the sake of contradiction, we assume that $x$ is an eigenvector with eigenvalue $\lambda$ that is strictly less than $\lambda_1$. By Lemma~\ref{lem:pcastaionary} we have $\lambda = \|x\|_2^2$. 
By elementary calculation, we have that  
\begin{align}
\nabla^2 g(x) =  2xx^{\top} -  M + \|x\|_2^2 \cdot I\mper\label{eqn:2}
\end{align}

Let $v_1$ be the top eigenvector of $M$ with eigenvalue $\lambda_1$ and with $\ell_2$ norm 1. 
Then, because $\nabla^2 g(x) \succeq 0$, we have that 
\begin{align}
v_1^\top \nabla^2 g(x) v \ge 0  \label{eqn:1}
\end{align}
It's a basic property of eigenvectors of positive semidefinite matrix that any pairs of eigenvectors with different eigenvalues are orthogonal to each other. Thus we have $\inner{x, v_1} = 0$. 
It follows equation~\eqref{eqn:1} and~\eqref{eqn:2} that 
\begin{align}
0 & \leq  v_1^\top (2xx^{\top} -  M + \|x\|_2^2 \cdot I) v_1 =  \|x\|_2^2 - v_1^\top M v_1 \tag{by $\inner{x, v_1} = 0$}\\
&=  \lambda - \lambda_1 \tag{by that $v_1$ has eigenvalue $\lambda_1$ and that $\lambda=\|x\|_2^2$}\\
& < 0 \tag{by the assumption}
\end{align}
which is a contradiction. \end{proof}

\subsection{Matrix completion}
\label{sec:mc}

Matrix completion is the problem of recovering a low-rank matrix from partially observed entries, which has been widely used in collaborative filtering and recommender systems, dimension reduction, and multi-class learning. Despite the existence of elegant convex relaxation solutions, stochastic gradient descent on non-convex objectives are widely adopted in practice for scalability. We will focus on the rank-1 symmetric matrix completion in this chapter, which demonstrates the essence of the analysis.

\subsubsection{Rank-1 case of matrix completion}

Let $M = zz^{\top}$ be a rank-1 symmetric matrix with factor $z \in \R^{d}$ that we aim to recover.  We assume that we observe each entry of $M$ with probability $p$ independently.\footnote{Technically, because $M$ is symmetric, the entries at $(i,j)$ and $(j,i)$ are the same. Thus, we assume that, with probability $p$ we observe both entries and otherwise we observe neither.}
Let $\Omega \subset [d]\times [d]$ be the set of entries observed. 

Our goal is to recover from the observed entries of $M$ the vector $z$ up to sign flip (which is equivalent to recovering $M$). 

A known issue with matrix completion is that if $M$ is ``aligned'' with standard basis, then it's impossible to recover it. E.g., when $M = e_je_j^\top$ where $e_j$ is the $j$-th standard basis, we will very likely observe only entries with value zero, because $M$ is sparse. Such scenarios do not happen in practice very often though. The following standard assumption will rule out these difficult and pathological cases: 

\begin{assumption}[Incoherence] \label{assump:incoherence}
W.L.O.G, we assume that $\|z\|_2=1$. In addition, we assume that $z$ satisfies 
	$
	\|z\|_\infty \le \frac{\mu}{\sqrt{d}}.
	$  We will think of $\mu$ as a small constant or logarithmic in $d$, and the sample complexity will depend polynomially on it.
\end{assumption}

In this setting, the vector $z$ can be recovered exactly up to a sign flip provided $\widetilde{\Omega}(d)$ samples. However, for simplicity, in this subsection we only aim to recover $z$ with an $\ell_2$ norm error $\eps \ll 1$.  We assume that $p = \mbox{poly}(\mu,\log d)/(d\epsilon^2)$ which means that the expected number of observations is on the order of $d/\eps\cdot \polylog d$.  
We analyze the following objective  that minimizes the total squared errors on the observed entries:
\begin{align}
\argmin_x ~f(x) := \frac12 \sum_{(i,j)\in \Omega}(M_{ij}-x_ix_j)^2= \frac12\cdot \norm{\Po( M-xx^{\top})}_F ^2 \mper
\end{align}
Here $P_\Omega(A)$ denotes the matrix obtained by zeroing out all the entries of $A$ that are not in $\Omega$. 
For simplicity, we only focus on characterizing the landscape of the objective in the following domain $\mathcal{B}$ of incoherent vectors that contain the ground-truth vector $z$ (with a buffer of factor of 2) \begin{align}
\mathcal{B} & = \Set{x: \|x\|_{\infty}  < \frac{2\mu}{\sqrt{d}}} \mper\label{eqn:x_incoherent_0} 
\end{align}

We note that the analyzing the landscape inside $\mathcal{B}$ does not suffice because the iterates of the algorithms may leave the set $\mathcal{B}$. We refer the readers to the original paper~\citep{ge2016matrix} for an analysis of the landscape over the entire space, or to the recent work~\citep{ma2018implicit} for an analysis that shows that the iterates won't leave the set of incoherent vectors if the initialization is random and incoherent.

The global minima of $f(\cdot)$ are $z$ and $-z$ with function value 0. In the rest of the section, we prove that all the local minima of $f(\cdot)$ are $O(\sqrt{\epsilon})$-close to $\pm z$.

\begin{theorem}\label{lem:mc:partial}
In the setting above,  all the local minima of $f(\cdot)$ inside the set $\mathcal{B}$ are $O(\sqrt{\epsilon})$-close to either $z$ or $-z$.\footnote{It's also true that the only local minima are exactly $\pm z$, and that $f$ has strict saddle property. However, their proofs are involved and beyond the scope of this chapter.}
\end{theorem}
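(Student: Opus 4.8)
The plan is to reduce to the already-understood landscape of the full-observation PCA objective $g(x)=\tfrac12\norm{M-xx^\top}_F^2$ from~\eqref{eqn:g}, treating $f$ as a perturbation of $p\cdot g$ and paying only an $O(\sqrt\epsilon)$ price for the missing entries. I would work throughout with the rescaled objective $\tfrac1p f$, which has the same critical points and local minima as $f$ and whose expectation over the draw of $\Omega$ is exactly $g$. Fix a local minimum $x\in\mathcal B$, choose the sign of $z$ so that $\inner{z,x}\ge 0$ (harmless since $M=zz^\top=(-z)(-z)^\top$), and write $s=\norm x_2^2$ and $t=\inner{z,x}\ge 0$; membership in $\mathcal B$ gives $s\le 4\mu^2$ and $t\le\sqrt s$. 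The target is $\norm{x-z}_2=O(\sqrt\epsilon)$.

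The heart of the argument is a concentration step. I would show that with probability at least $1-\delta$ the pattern $\Omega$ is ``good,'' meaning that \emph{uniformly} over all $x\in\mathcal B$ the rescaled gradient, and the rescaled Hessian tested in the single direction $z$, are within $O(\epsilon)$ of their full-observation counterparts:
\[
\Norm{\tfrac1p P_\Omega(M-xx^\top)\,x-(M-xx^\top)x}_2\le O(\epsilon),\qquad \abr{\tfrac1p\nabla^2 f(x)[z,z]-\nabla^2 g(x)[z,z]}\le O(\epsilon).
\]
Each follows from matrix Bernstein-type bounds for the sparse random operator $P_\Omega$ restricted to the incoherent, rank-$\le2$ matrices occurring here (namely $M-xx^\top$ and $xz^\top+zx^\top$), and the polynomial factors hidden in $p=\poly(\mu,\log d)/(d\epsilon^2)$ are precisely what is needed to push these deviations down to $O(\epsilon)$. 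This is the step I expect to be the main obstacle: a crude union bound over an $\epsilon$-net of $\mathcal B$ is far too lossy, so one must exploit that the tested matrices range over a low-dimensional, incoherent family and that $P_\Omega$ concentrates on that family as an operator. A careful treatment of these inequalities is essentially the technical core of~\citet{ge2016matrix}, which I would cite rather than reproduce.

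Granting the good event, the remainder is an elementary reprise of the proof of Theorem~\ref{thm:pca}. From $\nabla f(x)=0$ and the analogue of the computation in Lemma~\ref{lem:pcastaionary} (which gives $\nabla f(x)\propto P_\Omega(M-xx^\top)x$), we get $P_\Omega(M-xx^\top)x=0$, so the first displayed bound yields $\norm{s\,x-t\,z}_2=O(\epsilon)$; taking inner products with $x$ and with $z$, and using $\norm x_2,\norm z_2=O(1)$, gives $\abr{s^2-t^2}=O(\epsilon)$ and $\abr{t(s-1)}=O(\epsilon)$. From $\nabla^2 f(x)\succeq 0$ we have $\nabla^2 f(x)[z,z]\ge 0$, so by the second displayed bound and the Hessian formula~\eqref{eqn:2}, $z^\top(2xx^\top-zz^\top+sI)z=s+2t^2-1\ge -O(\epsilon)$; together with $t\le\sqrt s$ this forces $s\ge\tfrac13-O(\epsilon)\ge\tfrac14$, which in particular rules out the spurious critical point $x=0$ exactly as the direction $v_1$ rules out non-top eigenvectors in Theorem~\ref{thm:pca}.

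To close, $s\ge\tfrac14$ and $\abr{s^2-t^2}=O(\epsilon)$ give $\abr{s-t}=O(\epsilon)$, hence $t\ge\tfrac14-O(\epsilon)\ge\tfrac18$; then $\abr{t(s-1)}=O(\epsilon)$ forces $\abr{s-1}=O(\epsilon)$ and therefore $\abr{t-1}=O(\epsilon)$. Finally $\norm{x-z}_2^2=s-2t+1=(s-1)-2(t-1)=O(\epsilon)$, i.e.\ $\norm{x-z}_2=O(\sqrt\epsilon)$; undoing the sign choice shows every local minimum in $\mathcal B$ is $O(\sqrt\epsilon)$-close to $z$ or to $-z$. (Obtaining the sharper statement of the footnote---that the only local minima are exactly $\pm z$---would instead require testing the Hessian along $x-z$ and combining it with the \emph{full} first-order condition; I would not pursue that here.)
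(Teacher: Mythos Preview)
Your argument is essentially the paper's own proof: test the first-order condition against $x$ and against $z$ (the paper's Claim~1p and the step around~\eqref{eqn:93}), test the second-order condition against $z$ (Claim~2p), and close with the same algebra; your $s,t$ notation just packages these three scalar relations more cleanly. One minor over-claim: you ask for the uniform \emph{vector} bound $\norm{\tfrac1p P_\Omega(M-xx^\top)x-(M-xx^\top)x}_2\le O(\epsilon)$, which amounts to concentration of $\inner{P_\Omega(A),vx^\top}$ against \emph{all} unit $v$, whereas you only ever use it after pairing with the incoherent vectors $x$ and $z$; the paper accordingly states and uses only the scalar concentration of Theorem~\ref{thm:concentration} (rank-one, incoherent test matrices), which already delivers $|s^2-t^2|=O(\epsilon)$ and $|t(s-1)|=O(\epsilon)$ directly and sidesteps the harder operator-norm uniformity you flag as the main obstacle.
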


It's insightful to compare with the full observation case when $\Omega = [d]\times [d]$. The corresponding objective is exactly the PCA objective $g(x) = \frac12\cdot \norm{ M-xx^{\top}}_F ^2$ defined in equation~\eqref{eqn:g}. Observe that $f(x)$ is a sampled version of the  $g(x)$, and therefore we expect that they share the same geometric properties. In particular, recall that $g(x)$ does not have spurious local minima and thus we expect neither does $f(x)$. 

However, it‘s non-trivial to extend the proof of Theorem~\ref{thm:pca} to the case of partial observation, because it uses the {\em properties of eigenvectors} heavily. Indeed, suppose we  imitate the proof of Theorem~\ref{thm:pca}, we will first compute the gradient of $f(\cdot)$: 
\begin{align}
& \nabla f(x) = P_{\Omega}(zz^{\top}-xx^{\top})x\mper
\end{align}
Then, we run into an immediate difficulty --- how shall we solve the equation for stationary points $f(x) = P_{\Omega}(M-xx^{\top})x= 0$. Moreover, even if we could have a reasonable approximation for the stationary points, it would be difficult to examine their Hessians  without using the exact orthogonality of the eigenvectors. 

The lesson from the trial above is that we may need to have an alternative proof for the PCA objective (full observation) that relies less on solving the stationary points exactly. Then more likely the proof can be extended to the matrix completion (partial observation) case. In the sequel, we follow this plan by first providing an alternative proof for Theorem~\ref{thm:pca}, which does not require solving the equation $\nabla g(x) = 0$, and then extend it via concentration inequality to a proof of Theorem~\ref{lem:mc:partial}.   The key intuition will be is the following:

\vspace{.05in}
\textit{Proofs that consist of inequalities that are linear in $\mathbf{1}_{\Omega}$ are often easily generalizable to partial observation case.} 
\vspace{.05in}

\noindent 
\sloppy Here statements that are linear in $\mathbf{1}_{\Omega}$ mean the statements of the form $\sum_{ij} 1_{(i,j) \in \Omega} T_{ij} \le a$. We will call these kinds of proofs ``simple'' proofs in this section. 
Indeed, by the law of large numbers, when the sampling probability $p$ is sufficiently large, we have that 
\begin{align}
\underbrace{\sum_{(i,j)\in \Omega} T_{ij}}_{\textrm{partial observation}} = \sum_{i,j} \mathbf{1}_{(i,j)\in \Omega}T_{ij} \approx p \underbrace{\sum_{i,j}T_{ij}}_{\textrm{full observation}}\label{eqn:partialfull}
\end{align}
Then, the mathematical implications of $p\sum T_{ij} \le a$ are expected to be similar to the implications of $\sum_{(i,j)\in \Omega} T_{ij} \le a/p$, up to some small error introduced by the approximation.

What natural quantities about $f$ are of the form $\sum_{(i,j)\in \Omega} T_{ij}$? First, quantities of the form $\inner{P_\Omega(A), B}$ can be written as $\sum_{(i,j)\in \Omega} A_{ij} B_{ij}$. Moreover, both the projection of $\nabla f$ and $\nabla^2 f$ are of the form $\inner{P_\Omega(A), B}$:
\begin{align}
\langle v, \nabla f(x)\rangle & = \inner{v,P_{\Omega}(zz^{\top}-xx^{\top})x} = \inner{P_{\Omega}(zz^{\top}-xx^{\top}), vx^\top} \nonumber\\
\langle v, \nabla^2 f(x) v\rangle & = \inner{P_{\Omega}(vx^{\top}+xv^{\top}), vx^{\top}+xv^{\top}}_F^2 - 2\inner{P_{\Omega}(vv^{\top}-xx^{\top}), vv^\top} \nonumber
\end{align}
The concentration of these quantities can all be captured by the following theorem below: 
\begin{theorem}\label{thm:concentration}
		Let $\eps > 0$ and $ p = \mbox{poly}(\mu,\log d)/(d\epsilon^2)$. Then, with high probability of the randomness of $\Omega$, we have that for all $A = uu^\top , B = vv^\top \in \R^{d\times d}$, where $\|u\|_2 \le 1, \|v\|2\le 1$ and $\|u\|_\infty, \|v\|_\infty \leq 2\mu/\sqrt{d}$. 
\begin{align}
|\inner{P_\Omega(A), B}/p - \inner{A,B}| \le \eps\mper\end{align}
\end{theorem}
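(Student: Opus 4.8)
The plan is to reduce the uniform statement to a scalar concentration bound for each fixed pair $(u,v)$ and then pass to the supremum by an $\epsilon$-net over the admissible set of vectors. Throughout, write $R\in\{0,1\}^{d\times d}$ for the (symmetric) observation mask, so that $\Po(C)=R\circ C$ entrywise and the entries of $R$ on and above the diagonal are independent $\bern(p)$ variables (recall that $(i,j)$ and $(j,i)$ are observed together).

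\textbf{Step 1 (reduction to a scalar quantity).} For $A=uu^\top$ and $B=vv^\top$ we have $\inner{\Po(A),B}=\sum_{(i,j)\in\Omega}u_iv_i\,u_jv_j=\sum_{(i,j)\in\Omega}w_iw_j$ with $w_i:=u_iv_i$, while $\inner{A,B}=\inner{u,v}^2=\big(\sum_i w_i\big)^2$. The hypotheses $\|u\|_\infty,\|v\|_\infty\le 2\mu/\sqrt d$ and $\|u\|_2,\|v\|_2\le 1$ give $\|w\|_\infty\le 4\mu^2/d$ and $\sum_i w_i^2\le \|v\|_\infty^2\|u\|_2^2\le 4\mu^2/d$. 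So it suffices to show that with high probability $\big|\tfrac1p\sum_{(i,j)\in\Omega}w_iw_j-\big(\sum_i w_i\big)^2\big|\le\eps$ for every $w$ arising from an admissible pair $(u,v)$.

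\textbf{Step 2 (pointwise Bernstein).} Fix $(u,v)$, hence $w$, and set $X:=\sum_{(i,j)\in\Omega}w_iw_j$, a sum of the $\binom d2+d$ independent contributions indexed by unordered pairs $\{i,j\}$. Then $\E X=p\big(\sum_i w_i\big)^2$, each contribution is at most $b:=2\|w\|_\infty^2=O(\mu^4/d^2)$ in magnitude, and $\mathrm{Var}(X)\le 4p\big(\sum_i w_i^2\big)^2=:\sigma^2=O(p\mu^4/d^2)$. Bernstein's inequality yields, for a universal $c>0$ and using $\eps\le 1$,
\[
\Pr\!\big[|X-\E X|>p\eps\big]\le 2\exp\!\Big(-\tfrac{(p\eps)^2/2}{\sigma^2+bp\eps/3}\Big)\le 2\exp\!\Big(-c\,\tfrac{p\eps^2 d^2}{\mu^4}\Big).
\]

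\textbf{Step 3 (net and discretization).} Let $\mathcal{N}$ be an $\eta$-net in $\ell_2$ of the incoherent ball $\{u:\|u\|_2\le 1,\ \|u\|_\infty\le 2\mu/\sqrt d\}$, with $|\mathcal{N}|\le(3/\eta)^d$. For a general admissible $(u,v)$ choose $u',v'\in\mathcal{N}$ within distance $\eta$. Writing $\inner{\Po(uu^\top),vv^\top}=u^\top\!\big(\Diag(v)\,R\,\Diag(v)\big)u$ and using $\|R\|_{\mathrm{op}}\le d$ (its maximum row sum) together with incoherence of $v$, we get $\|\Diag(v)R\Diag(v)\|_{\mathrm{op}}\le\|v\|_\infty^2\|R\|_{\mathrm{op}}\le 4\mu^2$; hence the map is $O(\mu^2)$-Lipschitz in each of $u,v$ on the admissible domain (dimension-free), so $\big|\tfrac1p\inner{\Po(uu^\top),vv^\top}-\tfrac1p\inner{\Po(u'u'^\top),v'v'^\top}\big|=O(\mu^2\eta/p)$, and similarly $\big|\inner{u,v}^2-\inner{u',v'}^2\big|=O(\eta)$. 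Choosing $\eta$ inverse-polynomial in $d$ makes both discretization errors at most $\eps/2$ (recall $1/p=d\eps^2/\poly(\mu,\log d)$). Finally, union-bound the Step-2 event (with $\eps$ replaced by $\eps/2$) over the $|\mathcal{N}|^2\le(3/\eta)^{2d}$ pairs in $\mathcal{N}\times\mathcal{N}$: taking $p=C\mu^4(\log d+\log(1/\delta))/(d\eps^2)$ for a large enough constant $C$ makes the Bernstein exponent dominate $2d\log(3/\eta)+\log(1/\delta)$, so the failure probability is at most $\delta$. Combining with the triangle inequality proves the claim.

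\textbf{Main obstacle.} The delicate point is the discretization step. A naive Lipschitz estimate of $(u,v)\mapsto\tfrac1p\inner{\Po(uu^\top),vv^\top}$ would carry the factor $1/p=\poly(d)$ and force $\eta$ to be exponentially small, at which point $|\mathcal{N}|=\exp(\poly(d))$ overwhelms the Bernstein tail. The resolution is to control the relevant operator norm by $\|R\|_{\mathrm{op}}\le d$ rather than by $|\Omega|$ or $\|R\|_F$, and to exploit incoherence of $v$ so that conjugating $R$ by $\Diag(v)$ shrinks the operator norm to $O(\mu^2)$, making the effective Lipschitz constant dimension-free; then $\eta$ only needs to be inverse-polynomial, and the $\exp(\tilde O(d))$ net size is beaten by the Bernstein exponent $\Omega(d\cdot\polylog d)$. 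A minor additional point is to place the net inside the incoherent ball so that net points themselves satisfy the hypotheses used in Steps 1--2; the remaining computations are routine.
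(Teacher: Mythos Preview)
The paper does not actually prove this theorem: it is stated as a black-box concentration tool and the surrounding results are attributed to \cite{ge2016matrix} in the Notes section. So there is no ``paper's own proof'' to compare against here.

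Your argument is essentially correct and follows the natural route (pointwise Bernstein plus an $\eta$-net), and you have correctly identified the real difficulty: without using incoherence, the Lipschitz constant of $(u,v)\mapsto \tfrac{1}{p}\inner{\Po(uu^\top),vv^\top}$ would carry a $1/p=\Theta(d)$ factor and the union bound would fail. One small clarification in Step~3: the bound $\|\Diag(v)\,R\,\Diag(v)\|_{\mathrm{op}}\le O(\mu^2)$ gives you $O(\mu^2)$-Lipschitz dependence in $u$ by using incoherence of $v$; to control the variation in $v$ you need the symmetric estimate, which uses incoherence of $u$ (or of the net point $u'$). Concretely, when you triangle $(u,v)\to(u',v)\to(u',v')$, the first leg uses incoherence of $v$ (an admissible point by hypothesis) and the second leg uses incoherence of $u'$ (a net point, incoherent because you placed the net inside the incoherent ball). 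An equivalent and slightly cleaner way to see the dimension-free bound is to write $\inner{\Po(uu^\top),vv^\top}=(u\circ v)^\top R\,(u\circ v)$ and note that $\|u\circ v\|_2\le \|u\|_\infty\|v\|_2\le 2\mu/\sqrt d$, so the whole quantity is at most $\|R\|_{\mathrm{op}}\cdot 4\mu^2/d\le 4\mu^2$, and the Lipschitz estimate follows from the same representation. With that caveat, the plan goes through as written.
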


We will provide two claims below, combination of which proves Theorem~\ref{thm:pca}. In the proofs of these two claims, all the inequalities are of the form of LHS of equation~\eqref{eqn:partialfull}. Following each claim, we will immediately provide its extension to the partial observation case.

\begin{customthm}{1f}\label{claim:full1}
	Suppose $x\in \mathcal{B}$ satisfies $\nabla g(x) = 0$, then $\inner{x,z}^2 = \|x\|_2^4$.		
\end{customthm}
\begin{proof} By elementary calculation
	\begin{align}
	& \nabla g(x) = (zz^{\top}-xx^{\top})x= 0  \nonumber\\
	\Rightarrow ~~& \inner{x,\nabla g(x)} = \inner{x,(zz^{\top}-xx^{\top})x} = 0 \quad\quad \quad \label{eqn:91}\\
	\Rightarrow ~~& \inner{x,z}^2 =\|x\|_2^4 \nonumber
	\end{align}
	Intuitively, a stationary point $x$'s norm is governed by its correlation with $z$. 
\end{proof}
\ni The following claim is the counterpart of Claim~\ref{claim:full1} in the partial observation case. 
\begin{customthm}{1p}\label{claim:partial1}
	Suppose $x\in \mathcal{B}$ satisfies $\nabla f(x) = 0$, then $\inner{x,z}^2 \ge \|x\|^4 -\epsilon$.		
\end{customthm}
\begin{proof} Imitating the proof of Claim~\ref{claim:full1}, 
	\begin{align}
	& \nabla f(x) = P_{\Omega}(zz^{\top}-xx^{\top})x= 0  \nonumber\\
	\Rightarrow ~~& \inner{x,\nabla f(x)} = \inner{x,P_{\Omega}(zz^{\top}-xx^{\top})x} = 0\quad\quad \label{eqn:92}\\
		\Rightarrow ~~&  \inner{x,\nabla g(x)} = |\inner{x,(zz^{\top}-xx^{\top})x}| \leq \eps \quad\quad \label{eqn:13} \\
	\Rightarrow ~~& \inner{x,z}^2 \ge \|x\|_2^4 -\epsilon\nonumber\end{align}
	where derivation from line~\eqref{eqn:92} to ~\eqref{eqn:13} follows the fact that line~\eqref{eqn:92} is a sampled version of~\eqref{eqn:13}. Technically, we can obtain it by applying Theorem~\ref{thm} twice with $A = B = xx^\top$ and $A=xx^\top$ and $B=zz^\top$ respectively. \end{proof}

\begin{customthm}{2f}\label{claim:full2}
	If $x\in \mathcal{B}$ has positive Hessian $\nabla^2 g(x)\succeq 0$, then $\|x\|_2^2 \ge 1/3$.
\end{customthm}

\begin{proof}
	By the assumption on $x$, we have that $\inner{z,\nabla^2 g(x)z} \ge 0$. Calculating the quadratic form of the Hessian (which can be done by elementary calculus and is skipped for simplicity), we have 
	\begin{align}
	& \inner{z,\nabla^2 g(x)z} = \norm{zx^{\top}+xz^{\top}}_F^2 - 2z^{\top}(zz^{\top}-xx^{\top})z\ge 0 \label{eqn:14}
	\end{align}
	This implies that 
	\begin{align}
	& \Rightarrow \norm{x}_2^2 + 2\inner{z,x}^2 \ge 1 \nonumber\\
	& \Rightarrow \norm{x}_2^2\ge 1/3\tag{since $\inner{z,x}^2\le \norm{x}_2^2$}
	\end{align} 
\end{proof}

\begin{customthm}{2p}\label{claim:partial2}
	If $x\in \mathcal{B}$ has positive Hessian $\nabla^2 f(x)\succeq 0$, then $\|x\|_2^2 \ge 1/3-\epsilon/3$.
\end{customthm}
\begin{proof}
	Imitating the proof of Claim~\ref{claim:full2}, calculating the quadratic form over the Hessian at $z$, we  have 
	\begin{align}
 \inner{z,\nabla^2 f(x)z}  = \norm{P_{\Omega}(zx^{\top}+xz^{\top})}_F^2 - 2z^{\top}P_{\Omega}(zz^{\top}-xx^{\top})z\ge 0 
	\end{align}
Note that equation above is just a sampled version of equation~\eqref{eqn:14}, applying Theorem~\ref{thm:concentration} for various times (and note that $\inner{P_{Omega}(A), P_{\Omega}(B)} = \inner{P_\Omega(A), B}$, we can obtain that 
\begin{align}
& \norm{P_{\Omega}(zx^{\top}+xz^{\top})}_F^2 - 2z^{\top}P_{\Omega}(zz^{\top}-xx^{\top})z \nonumber\\
& = p \cdot \left(\norm{zx^{\top}+xz^{\top}}_F^2 - 2z^{\top}(zz^{\top}-xx^{\top})z \pm \eps \right)\nonumber
\end{align}
Then following the derivation in the proof of Claim~\ref{claim:full2}, we achieve the same conclusion of Claim~\ref{claim:full2} up to approximation:
$
\norm{x}^2\ge 1/3-\epsilon/3\nonumber. 
$
\end{proof}

\ni With these claims, we are ready to prove Theorem~\ref{thm:pca} (again) and Theorem~\ref{lem:mc:partial}.

\begin{proof}[Proof of Theorem~\ref{thm:pca} (again) and Theorem~\ref{lem:mc:partial}]
	By Claim~\ref{claim:full1} and~\ref{claim:full2}, we have $x$ satisfies $\inner{x,z}^2\ge \|x\|_2^4\ge 1/9$.
	Moreover, we have that $\nabla g(x) =  0 $ implies 
	\begin{align}
	& \inner{z,\nabla g(x)} = \inner{z,(zz^{\top}-xx^{\top})x} = 0\label{eqn:93}\\
	\Rightarrow ~~& \inner{x,z} (1 - \|x\|_2^2 ) = 0\nonumber\\
	\Rightarrow ~~& \|x\|_2^2  = 1 \tag{by $\inner{x,z}^2\ge 1/9$} 
	\end{align}
	Then by Claim~\ref{claim:full1} again we obtain $\inner{x,z}^2 = 1$, and therefore $x = \pm z$. 

	The proof of Theorem~\ref{lem:mc:partial} are analogous (and note that such analogy was by design). When $\eps\le 1/12$, we by Claim~\ref{claim:partial1} and~\ref{claim:partial2}, we have when $\eps\le 1/16$, 
	\begin{align}
\inner{x,z}^2\ge \|x\|_2^4 - \eps\ge \left(\frac{1-\eps}{3}\right)^2 -\eps \ge \frac{1}{32} 
	\end{align}
Because 	$|\inner{z,\nabla g(x)} - \inner{z,\nabla f(x)}/p|\le \eps$, we have that 
	\begin{align}
	& |\inner{z,\nabla g(x)}| = |\inner{z,(zz^{\top}-xx^{\top})x}| = O(\eps)\nonumber\\
	\Rightarrow ~~& |\inner{x,z} (1 - \|x\|_2^2 ) |=  O(\eps)\nonumber\\
	\Rightarrow ~~& \|x\|_2^2  = 1 \pm O(\eps) \tag{by $\inner{x,z}^2\ge 1/32$} 
	\end{align}
	Then by Claim~\ref{claim:partial1} again, we have $\inner{x,z}^2 \ge 1-O(\eps)$ which implies that $|\inner{x,z}| \ge 1-O(\eps)$.  Now suppose $\inner{x,z} \ge 1-O(\eps)$, then we have 
	\begin{align}
	\|x-z\|_2^2 = \|x\|_2^2 + \|z\|_2^2 - 2\inner{x,z} \le 1+O(\eps) + 1 - (1-O(\eps) \le O(\eps)\nonumber
	\end{align}
	Therefore $x$ is $O(\sqrt{\eps})$ close to $z$. On the other hand, if $\inner{x,z} \le -(1-O(\eps))$, we can similarly conclude that $x$ is $O(\sqrt{\eps})$-close to $-z$. 
\end{proof}

\section{Landscape of Tensor Decomposition}
\label{sec:tensor}
In this section, we analyze the optimization landscape for another machine learning problem, tensor decomposition. The fundamental difference of tensor decomposition from matrix factorization  problems or generalized linear models is that the non-convex objective function here has multiple isolated local minima, and therefore the set of local minima does not have rotational invariance (whereas in matrix completion or PCA, the set of local minima are rotational invariant.). This essentially prevents us to only use linear algebraic techniques, because they are intrinsically rotational invariant.

\subsection{Non-convex optimization for orthogonal tensor decomposition and global optimality}

We focus on one of the simplest tensor decomposition problems, orthogonal 4-th order tensor decomposition.  Suppose we are given the entries of a symmetric 4-th order tensor $T \in \R^{d \times d \times d\times d}$ which has a low rank structure in the sense that: 
\begin{align}
T = \sum_{i=1}^n a_i\otimes a_i\otimes a_i \otimes a_i \label{eqn:lowrank}
\end{align}
where $a_1,\dots, a_n \in \R^d$. Our goal is to recover the underlying components $a_1,\dots, a_n$.  We assume in this subsection that $a_1,\dots, a_n$ are orthogonal vectors in $\R^d$ with unit norm (and thus implicitly we assume $n\le d$.) Consider the objective function
\begin{align}
\argmax & ~~f(x) := \inner{T, x^{\otimes 4}}\label{eqn:obj}\\
\textup{s.t.} &~~ \|x\|_2^2 = 1 \nonumber
\end{align}
The optimal value function for the objective is the (symmetric) injective norm of a tensor $T$. In our case, the global maximizers of the objective above are exactly the set of components that we are looking for. 
\begin{theorem}\label{thm:tensorglobal}
	Suppose $T$ satisfies equation~\eqref{eqn:lowrank} with orthonormal components $a_1,\dots, a_n$. Then, the global maximizers of the objective function~\eqref{eqn:obj} are exactly $\pm a_1, \dots, \pm a_n$. 
\end{theorem}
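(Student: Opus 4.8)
The plan is to reduce the constrained optimization to an elementary problem about a quadratic form on the probability simplex. First I would expand the objective using the low-rank structure~\eqref{eqn:lowrank} together with multilinearity of the tensor inner product:
\begin{align}
f(x) = \inner{T, x^{\otimes 4}} = \sum_{i=1}^n \inner{a_i, x}^4 \mper \nonumber
\end{align}
Setting $c_i = \inner{a_i,x}$ for $i \in [n]$ and using that $a_1,\dots,a_n$ are orthonormal (hence can be extended to an orthonormal basis of $\R^d$), Parseval's inequality gives $\sum_{i=1}^n c_i^2 \le \|x\|_2^2 = 1$, with equality precisely when $x$ lies in the linear span of $a_1,\dots,a_n$. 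Thus maximizing $f$ over the unit sphere is equivalent to maximizing $\sum_{i=1}^n c_i^4$ over the set $\{c\in\R^n : \sum_i c_i^2 \le 1\}$.

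Next I would substitute $t_i = c_i^2 \ge 0$, turning the problem into: maximize $\sum_{i=1}^n t_i^2$ subject to $t_i \ge 0$ and $\sum_i t_i \le 1$. The key inequality is then the elementary bound $\sum_i t_i^2 \le \bigl(\sum_i t_i\bigr)^2 \le 1$, which holds because every $t_i \ge 0$, so the cross terms in $\bigl(\sum_i t_i\bigr)^2$ are nonnegative. This certifies that the maximum of $f$ on the sphere is at most $1$; and since $f(a_j) = 1$ and $a_j$ is feasible for each $j$, the maximum value equals $1$ and each $\pm a_j$ attains it.

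The only step requiring care is the equality analysis, which establishes that $\pm a_1,\dots,\pm a_n$ are the \emph{only} maximizers. If $f(x)=1$, then both inequalities above must be tight: $\sum_i t_i = 1$ forces $x$ to lie in the span of $a_1,\dots,a_n$, and $\sum_i t_i^2 = \bigl(\sum_i t_i\bigr)^2$ forces $t_i t_j = 0$ for all $i\ne j$, i.e.\ at most one $t_i$ is nonzero. Together these give exactly one $t_i$ equal to $1$ and the rest equal to zero, hence $c = \pm e_j$ and $x = \pm a_j$ for some $j$. I do not anticipate a genuine obstacle here; the two things to be careful about are correctly accounting for the component of $x$ orthogonal to the span of $a_1,\dots,a_n$ (handled by the Parseval step, which is why the constraint becomes $\sum_i c_i^2 \le 1$ rather than an equality) and tracking both equality conditions simultaneously in the final step.
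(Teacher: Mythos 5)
Your proof is correct. The paper in fact states Theorem~\ref{thm:tensorglobal} without supplying a proof, but your argument is the canonical one: after expanding $f(x)=\sum_{i}\inner{a_i,x}^4$, the bound $\sum_i t_i^2 \le \bigl(\sum_i t_i\bigr)^2 \le 1$ with $t_i=\inner{a_i,x}^2$ is exactly the comparison $\|c\|_4^4\le \|c\|_2^4$ that also underlies the paper's later local analysis, where after rotating so that $a_i=e_i$ the objective becomes $\|x\|_4^4$ and the maximum of the $\ell_4$ norm over the $\ell_2$ sphere is attained precisely at the signed coordinate vectors. Your treatment of the two simultaneous equality conditions --- tightness of $\sum_i \inner{a_i,x}^2\le\|x\|_2^2$ forcing $x$ into the span, and vanishing of the cross terms forcing at most one nonzero $t_i$ --- is the step most often glossed over and you handle it correctly; the only nitpick is that the inequality $\sum_i\inner{a_i,x}^2\le\|x\|_2^2$ is Bessel's inequality (Parseval is the equality case).
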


\subsection{All local optima are global}
We next show that all the local maxima of the objective~\eqref{eqn:obj} are also global maxima. In other words, we will show that $\pm a_1,\dots, \pm a_n$ are the only local maxima. We note that all the geometry properties here are defined with respect to the manifold of the unit sphere $\cM = \sphere^{d-1}$. (Please see Section~\ref{sec:manifold} for a brief introduction of the notions of manifold gradient, manifold local maxima, etc.)

\begin{theorem}\label{thm:tensorlocal}
In the same setting of Theorem~\ref{thm:tensorglobal}, all the local maxima (w.r.t the manifold $\sphere^{d-1}$) of the objective~\eqref{eqn:obj} are global maxima. \footnote{The function also satisfies the strict saddle property so that we can rigorously invoke Theorem~\ref{thm:manifold}. However, we skip the proof of that for simplicity.}
\end{theorem}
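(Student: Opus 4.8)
The plan is to use the manifold gradient and Hessian formulas from Section~\ref{sec:manifold} to first pin down all stationary points of $f$ on $\sphere^{d-1}$, and then eliminate, via the second-order condition, every stationary point other than $\pm a_1,\dots,\pm a_n$. Write $\bar f(x)=\sum_{i=1}^n\inner{a_i,x}^4$ for the ambient extension of the objective and set $c_i:=\inner{a_i,x}$; elementary calculus then gives $\nabla\bar f(x)=4\sum_i c_i^3 a_i$, $\nabla^2\bar f(x)=12\sum_i c_i^2 a_i a_i^\top$, and $x^\top\nabla\bar f(x)=4\bar f(x)$. Maximizing $f$ is minimizing $-f$, so a local maximizer $x$ (with $\|x\|_2=1$) must satisfy $\grad f(x)=P_x\nabla\bar f(x)=0$ and $\hessian f(x)=P_x\nabla^2\bar f(x)P_x-(x^\top\nabla\bar f(x))P_x\preceq 0$, where $P_x=\Id-xx^\top$.

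First I would read off the stationary points. The equation $P_x\nabla\bar f(x)=0$ forces $\nabla\bar f(x)$ parallel to $x$, and comparing with $x^\top\nabla\bar f(x)=4f(x)$ gives $\nabla\bar f(x)=4f(x)\,x$, i.e.\ $\sum_i c_i^3 a_i=f(x)\,x$. Taking inner products with each $a_j$ yields $c_j^3=f(x)c_j$, so every coordinate satisfies $c_j=0$ or $c_j^2=f(x)$; taking the component orthogonal to $\mathrm{span}(a_1,\dots,a_n)$ yields $f(x)\,x_\perp=0$, where $x_\perp$ is the part of $x$ outside that span. Hence either $f(x)=0$ --- which forces all $c_i=0$, so $x$ is a global minimizer of the nonnegative function $f$ and therefore not a local maximizer (every neighbourhood of such $x$ on the sphere contains points with strictly larger value, e.g.\ along the great circle toward $a_1$) --- or $f(x)>0$, which forces $x_\perp=0$ and all nonzero coordinates equal in magnitude; if $k$ denotes their number, $\sum_i c_i^2=1$ gives $|c_i|=1/\sqrt k$ and $f(x)=1/k$.

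It then remains to treat a stationary point with $f(x)=1/k$ and $k\ge 2$, and I would show its Hessian violates $\hessian f(x)\preceq 0$. Writing $c_i=s_i/\sqrt k$ with $s_i\in\{\pm1\}$ for $i$ in a size-$k$ set $S$ (and $c_i=0$ otherwise), pick distinct $i,j\in S$ and set $v=\tfrac1{\sqrt2}(s_i a_i-s_j a_j)$. Then $\|v\|_2=1$ and $\inner{v,x}=0$, so $v$ is tangent at $x$ and $P_x v=v$; since $\nabla^2\bar f(x)=\tfrac{12}{k}\sum_{l\in S}a_l a_l^\top$, one gets $v^\top\nabla^2\bar f(x)v=\tfrac{12}{k}$, hence $v^\top\hessian f(x)v=\tfrac{12}{k}-4f(x)\|v\|_2^2=\tfrac8k>0$. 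This contradiction shows the only local maximizers are $\pm a_1,\dots,\pm a_n$, which are exactly the global maximizers by Theorem~\ref{thm:tensorglobal}.

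The step I expect to be the crux is getting the Riemannian Hessian right: the correction term $-(x^\top\nabla\bar f(x))P_x=-4f(x)P_x$ is exactly what makes the curvature along $v$ positive, whereas the bare ambient form $P_x\nabla^2\bar f(x)P_x$ is positive semidefinite and, taken alone, would be (wrongly) consistent with $x$ being a local maximizer. A secondary subtlety is the degenerate family of stationary points with $f(x)=0$ (present only when $n<d$), where $\hessian f(x)=0$ identically and non-maximality must be argued directly rather than through curvature.
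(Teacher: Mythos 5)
Your proof is correct and follows essentially the same route as the paper's: identify the stationary points as sign/permutation variants of $(1/\sqrt{k},\dots,1/\sqrt{k},0,\dots,0)$ via the manifold gradient, then rule out support size $k\ge 2$ by showing the two-coordinate test direction has positive Riemannian curvature ($8/k>0$). You are in fact slightly more careful than the paper, which reduces to $n=d$ and the standard basis "without loss of generality": you handle the $x_\perp\neq 0$ and $f(x)=0$ degeneracies explicitly and build the signs $s_i$ into the test vector so that it is genuinely tangent at $x$.
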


Towards proving the Theorem, we first note that the landscape property of a function is invariant to the coordinate system that we use to represent it. It's natural for us to use the directions of $a_1,\dots, a_n$ together with an arbitrary basis in the complement subspace of $a_1,\dots, a_n$ as the coordinate system. A more convenient viewpoint is that this choice of coordinate system is equivalent to assuming $a_1,\dots, a_n$ are the natural standard basis $e_1,\dots, e_n$. Moreover, one can verify that the remaining directions $e_{n+1},\dots, e_d$ are irrelevant for the objective because it's not economical to put any mass in those directions. Therefore, for simplicity of the proof, we make the assumption below without loss of generality: 
\begin{align}
n= d, \textup{ and } a_i = e_i, ~\forall i\in [n]\mper
\end{align} 
Then we have that $f(x) = \|x\|_4^4$.
We compute the manifold gradient and manifold Hessian using the formulae of $\grad f(x)$ and $\hessian f(x)$ in Section~\ref{sec:manifold}, 
\begin{align}
\grad f(x) &= 4P_x \nabla \bar{f}(x) = 4(\Id_{d\times d}-xx^\top) \begin{bmatrix}
x_1^3 \\
\vdots \\
x_d^3
\end{bmatrix}= 4\begin{bmatrix}
x_1^3  \\
\vdots \\
x_d^3
\end{bmatrix} - 4\|x\|_4^4 \cdot \begin{bmatrix}
x_1 \\
\vdots \\
x_d
\end{bmatrix}\mper\label{eqn:grad}
\end{align}
\begin{align}
\hessian f(x) & = P_x \nabla^2 \bar{f}(x) P_x - (x^\top \nabla \bar{f}(x)) P_x \nonumber\\
& = P_x\left(12\diag(x_1^2,\dots, x_d^2) - 4\|x\|_4^4 \cdot \Id_{d\times d}\right) P_x \label{eqn:tensorhessian}
\end{align}
where $\diag(v)$ for a vector $v\in \R^d$ denotes the diagonal matrix with $v_1,\dots, v_d$ on the diagonal. 
Now we are ready to prove Theorem~\ref{thm:tensorlocal}. In the proof, we will first compute all the stationary points of the objective, and then examine each of them and show that only $\pm a_1,\dots, \pm a_n$ can be local maxima.
\begin{proof}[Proof of Theorem~\ref{thm:tensorlocal}]
We work under the assumptions and simplifications above. We first compute all the stationary points of the objective~\eqref{eqn:obj} by solving $\grad f = 0$. Using equation~\eqref{eqn:grad}, we have that the stationary points satisfy that 
	\begin{align}
	x_i^3 = \|x\|_4^4 \cdot x_i, \forall i
	\end{align}
	It follows that $x_i = 0$ or $x_i = \pm \|x\|_4^{1/2}$. Assume that $s$ of the $x_i$'s are non-zero and thus take the second choice, we have that 
	\begin{align}
	1 = \|x\|_2^2 = s\cdot \|x\|_4^4
	\end{align}
	This implies that $\|x\|_4^4 = 1/s$, and $x_i = 0$ or $\pm 1/s^{1/2}$.  In other words, all the stationary points of $f$ are of the form $(\pm 1/s^{1/2}, \cdots, \pm 1/s^{1/2}, 0,\cdots, 0)$  (where there are $s$ non-zeros) for some $s\in [d]$ and all their permutations (over indices).  
	
	Next, we examine which of these stationary points are local maxima. Let $\tau = 1/s^{1/2}$ for simplicity. This implies that $\|x\|_4^4 = \tau^2$. Consider a stationary point $x = (\sigma_1\tau, \cdots, \sigma_s\tau, 0,\dots, 0)$ where $\sigma_i \in \{-1,1\}$. Let $x$ be a local maximum. Thus $\hessian f(X) \preceq 0$. We will prove that this implies $s= 1$. For the sake of contradiction, we assume $s\ge 2$. We will show that the Hessian cannot be negative semi-definite by finding a particular direction in which the Hessian has positive quadratic form. 
	
	The form of equation\eqref{eqn:tensorhessian} implies that for all $v$ such that $\inner{v,x} = 0$ (which indicates that $P_x v = v$), we have 
	\begin{align}
	v^\top \left((12\diag(x_1^2,\dots, x_d^2) - 4\|x\|_4^4 \Id\right) v \leq 0 \label{eqn:3}
	\end{align}
	We take $v = (1/2, -1/2)$ to be our test direction. Then LHS of the formula above simplifies to
	\begin{align}		
	3x_1^2 - 3x_2^2 - 2 \|x\|_4^4 = 6\tau^2 - 2 \|x\|_4^4 = 4\tau^2 > 0
	\end{align}
	which contradicts to equation~\eqref{eqn:3}. Therefore, $s = 1$, and we conclude that all the local maxima are $\pm e_1.\dots, \pm e_d$. 
\end{proof}

\section{Survey and Outlook: Optimization of Neural Networks}\label{sec:neural_net}

Theoretical analysis of algorithms for learning neural networks is highly challenging. We still lack handy mathematical tools. We will articulate a few technical challenges and summarize the attempts and progresses.

We follow the standard setup in supervised learning. Let $f_{\theta}$ be a neural network parameterized by parameters $\theta$.\footnote{E.g., a two layer neural network would be $f_{\theta}(x) = W_1\sigma(W_2x)$ where $\theta = (W_1,W_2)$ and $\sigma$ are some activation functions.} Let $\ell$ be the loss function, and $\{(x^{(i)},y^{(i)})\}_{i=1}^n$ be a set of i.i.d examples drawn from distribution $D$. The empirical risk is 
$
\hatL(\theta) = \frac{1}{n}\sum_{i=1}^{n}\ell(f_\theta(x^{(i)}), y^{(i)}), 
$
and the population risk is 
$
L(\theta) = \Exp_{(x,y)\sim D}\left[\ell(f_\theta(x), y)\right]. 
$

The major challenge of analyzing the landscape property of $\hatL$ or $L$ stems from the non-linearity of neural networks---$f_{\theta}(x)$ is neither linear in $x$, nor in $\theta$. As a consequence, $\hatL$ and $L$ are not convex in $\theta$. Linear algebra is at odds with neural networks---neural networks do not have good invariance property with respect to rotations of parameters or data points. 

\vspace{0.1in} \noindent{\bf  Linearized neural networks.} Early works for optimization in deep learning simplify the problem by considering linearized neural networks: $f_\theta$ is assumed to be a neural networks without any activations functions. E.g., $f_{\theta} = W_1W_2W_3 x$ with $\theta = (W_1,W_2,W_3)$ would be a three-layer feedforward linearized neural network. Now, the model $f_\theta$ is still not linear in $\theta$, but it is linear in $x$. This simplification maintains the property that $\hatL$ or $L$ are still nonconvex functions in $\theta$, but allows the use of linear algebraic tools to analyze the optimization landscapes of $\hatL$ or $L$. 

\citet{baldi1989neural,kawaguchi2016deep} show that all the local minima of $L(\theta)$ are global minima when $\ell$ is the squared loss and $f_\theta$ is a linearized feed-forward neural network (but $L(\theta)$ does have degenerate saddle points so that it does not satisfy the strict saddle property). ~\citet{hardt2016gradient,hardt17identity} analyze the landscape of learning linearized residual and recurrent neural networks  and show that all the stationary points (in a region) are global minima.  We refer the readers to~\citet{arora2018optimization} and references therein for some recent works along this line. 

There are various results on another simplification: two-layer neural networks with quadratic activations. In this case, the model $f_{\theta}(x)$ is linear in $x\otimes x$ and quadratic in the parameters, and linear algebraic techniques allow us to obtain relatively strong theory. See~\citet{li2017algorithmic, soltanolkotabi2018theoretical,du2018power} and references therein.

We remark that the line of results above typically applies to the landscape of the population losses as well as the empirical losses when there are sufficient number of examples.\footnote{Note that the former implies the latter when there are sufficient number of data points compared to the number of parameters, because in this case, the empirical loss has a similar landscape to that of the population loss due to concentration properties~\citep{mei2017landscape}. }

\vspace{0.1in} \noindent{\bf  Changing the landscape, by, e.g., over-parameterization or residual connection.} 
Somewhat in contrast to the clean case covered in earlier sections of this chapter, people have empirically found that the landscape properties of neural networks depend on various factors including the loss function, the model parameterization, and the data distribution. In particular, changing the model parameterization and the loss functions properly could ease the optimization. 

An effective approach to changing the landscape is to over-parameterize the neural networks --- using a large number of parameters by enlarging the width, often not necessary for expressivity and often bigger than the total number of training samples. It has been empirically found that wider neural networks may alleviate the problem of bad local minima that may occur in training narrower nets~\citep{livni2014computational}.  
This motivates a lot of studies of on the optimization landscape of over-parameterized neural networks. Please see~\citet{safran2016quality,venturi2018neural,soudry2016no,haeffele2015global} and the references therein.

We note that there is an important distinction between two type of overparameterizations: (a) more parameters than what's needed for sufficient expressivity but still fewer parameters than the number of training examples, and (b) more parameters than the number of training examples. Under the latter setting, analyzing the landscape of empirical loss no longer suffices because even if the optimization works, the generalization gap might be too large or in other words the model overfits (which is an issue that is manifested clearly in the NTK discussion below.) In the former setting, though the generalization is less of a concern, analyzing the landscape is more difficult because it has to involve the complexity of the ground-truth function. 

Two extremely empirically successful approaches in deep learning, residual neural networks~\citep{he15deepresidual} and batch normalization~\citep{ioffe2015batch} are both conjectured to be able to change the landscape of the training objectives and lead to easier optimization. This is an interesting and promising direction with the potential of circumventing certain mathematical difficulties, but existing works often suffers from the strong assumptions such as linearized assumption in ~\citet{hardt17identity} and the Gaussian data distribution assumption in ~\citet{ge2018learning}.

\vspace{0.1in} \noindent{\bf  Connection between over-parametrized model and Kernel method: the Neural Tangent Kernel (NTK) view. }
Another recent line of work studies the optimization dynamics of learning over-parameterized neural networks at a special type of initialization with a particular learning rate scheme~\citep{li2018learning,du2018gradient,jacot2018neural,allen2018convergence}, instead of characterizing the full landscape of the objective function. 
The main conclusion is of the following form:when using overparameterization (with more parameters than training examples), under a special type of initialization, optimizing with gradient descent can 
converge to a zero training error solution. 

The results can also be viewed/interpreted as a combination of landscape results and a convergence result: (i) the landscape in a small neighborhood around the initialization is sufficiently close to be convex, (ii) in the neighborhood a zero-error global minimum exists, and (iii) gradient descent from the initialization will not leave the neighborhood and will converge to the zero-error solution. 
Consider a non-linear model $f_{\theta}(\cdot)$ and an initialization $\theta_0$. We can approximate the model by a linear model  by Taylor expansion at $\theta_0$:
\begin{align}
f_\theta(x) \approx g_\theta(x)  & \triangleq \langle \theta-\theta_0, \nabla f_{\theta_0}(x)\rangle + f_{\theta_0}(x)  = \langle \theta, \nabla f_{\theta_0}(x)\rangle +  c(x)\label{eqn:approx}
\end{align}
where $c(x)$ only depends on $x$ but not $\theta$. Ignoring the non-essential shift $c(x)$,  the model $g_{\theta}$ can be viewed as a linear function over the feature vector $\nabla f_{\theta_0}(x)$. 
Suppose the approximation in~\eqref{eqn:approx} is accurate enough throughout the training, then we are essentially optimizing the linear model $g_{\theta}(x)$, which leads to the part (i). \textit{For certain settings of initialization}, it turns out that (ii) and (iii) can also be shown with some proper definition of the neighborhood.

\vspace{0.1in} \noindent{\bf  Limitation of NTK and beyond.} A common limitation of  analyses based on NTK is that they analyze directly the empirical risk whereas they do not necessarily provide good enough generalization guarantees. This is partially because the approach cannot handle regularized neural networks and the particular learning rate and level of stochasticity used in practice.  In practice, typically the parameter $\theta$ does not stay close to the initialization either because of a large initial learning rate or small batch size. When the number of parameters in $\theta$ is bigger than $n$, without any regularization, we cannot expect that $\hatL$ uniformly concentrates around the population risk. This raises the question of whether the obtained solution simply memorizes the training data and does not generalize to the test data. A generalization bound can be obtained by the NTK approach, by bounding the norm of the difference between the final solution and the initialization. However, such a generalization bound can only be effectively as good as what a kernel method can provide. In fact, \citet{wei2019regularization} show that, for a simple distribution, NTK has fundamentally worse sample complexity than a regularized objective for neural networks. This result demonstrates that the NTK regime of neural nets is statistically not as powerful as regularized neural nets, but it does not show that the regularized neural net can be optimized efficiently. 
Many recent works aim to separate training neural net and its NTK regime in a computationally-efficient sense, that is, to present a polynomial time algorithm of training neural networks that enjoys a better generalization guarantee than what the NTK result can offer. E.g., ~\cite{li2020learning} show that gradient descent can learn a two-layer neural net with orthonormal weights on a Gaussian data distribution from small but random initialization with a sample complexity better than the NTK approach.  \citet{allen2019can} present a family of functions that can be learn efficiently by three-layer neural networks but not by NTK. These results are still largely demonstrating the possibility of stronger results for neural networks than NTK on some special cases, and it remains a major open question to have more general analysis for neural networks optimization beyond NTK. 

\vspace{0.03in} \noindent{\bf  Regularized neural networks.}
Analyzing the landscape or optimization of a regularized objective is more challenging than analyzing the un-regularized ones. In the latter case, we  know that achieving zero training loss implies that we reach a global minimum, whereas in the former case, we know little about the function value of the global minima. Some progresses had been made for infinite-width two-layer neural networks~\citep{chizat2018global, mei2018mean,wei2019regularization,sirignano2018mean,rotskoff2018neural}. For example,~\citet{wei2019regularization} show that polynomial number of iterations of perturbed gradient descent can find a global minimum of an $\ell_2$ regularized objective function for infinite-width two-layer neural networks with homogeneous activations. However, likely the same general result won't hold for polynomial-width neural networks, if we make no additional assumptions on the data. 

\vspace{0.03in} \noindent{\bf  Algorithmic or implicit regularization.}
Empirical findings suggest, somewhat surprisingly, that even unregularized neural networks with over-parameterization can generalize~\citep{zhang2016understanding}. Moreover, different algorithms apparently converge to essentially differently global minima of the objective function, and these global minima have \textit{different} generalization performance! This means that the algorithms have a regularization effect, and fundamentally there is a possibility to delicately  analyze the dynamics of the iterates of the optimization algorithm to reason about exactly which global minimum it converges to. Such types of results are particular challenging because it requires fine-grained control of the optimization dynamics, and rigorous theory can often be  obtained only for relatively simple models such as linear models~\citep{soudry2018implicit} or matrix sensing~\citep{gunasekar2017implicit},  quadratic neural networks~\citep{li2017algorithmic}, a quadratically-parameterized linear model~\citep{woodworth2020kernel,vaskevicius2019implicit,haochen2020shape}, and special cases of two-layer neural nets with relu activations~\citep{li2019towards}.

\vspace{0.03in} \noindent{\bf  Assumptions on data distributions.} The author of the chapter and many others suspect that in the worst case, obtaining the best generalization performance of neural networks may be computationally intractable. Beyond the worst case analysis, people have made stronger assumptions on the data distribution such as Gaussian inputs~\citep{brutzkus2017globally,ge2018learning}, mixture of Gaussians or linearly separable data~\citep{brutzkus2017sgd}. The limitations of making Gaussian assumptions on the inputs are two-fold: a) it's not a realistic assumption; b) it may both over-estimate and under-estimate the difficulties of learning real-world data in different aspects. It is probably not surprising that Gaussian assumption can over-simplify the problem, but there could be other non-Gaussian assumptions that may make the problem even easier than Gaussians (e.g. see the early work in deep learning theory~\citep{arora2014provable}).

\section{Notes}

\citet{HillarL13} show that a degree four polynomial is NP-hard to optimize  and ~\citet{murty1987some} show that it's also NP-hard to check whether a point is not a local minimum. Our quantitative definition quasi-convexity (Definition~\ref{def:cond}) is from~\citet{hardt2016gradient}. Polyak-Lojasiewicz condition was introduced by~\citet{polyak1963gradient}, and see a recent work of ~\citet{karimi2016linear} for a proof of Theorem~\ref{thm:quasi} . The RSI condition was originally introduced in~\citet{zhang2013gradient}. 

The strict saddle condition was originally defined in~\citep{ge2015escaping}, and we use a variant of the definition formalized in the work of~\citet{lee2016gradient,agarwal2016finding}. Formal versions of Theorem~\ref{thm:global} and Theorem~\ref{thm:localmin} for various concrete algorithms can be found in e.g., ~\citet{nesterov2006cubic, ge2015escaping, agarwal2017finding, carmon2016accelerated,sun2015nonconvex} and their follow-up works.

Theorem~\ref{thm:manifold} is due to \citet[Theorem 12]{2016arXiv160508101B}. We refer the readers to the book~\cite{absil2007optimization} for the definition of gradient and Hessian on the manifolds and for the derivation of equation~\eqref{eqn:mani-gradient} and~\eqref{eqn:mani-hessian}.\footnote{For example, the gradient is defined in~\citet[Section3.6, Equation (3.31)]{absil2007optimization}, and the Hessian is defined in~\citet[Section 5.5, Definition 5.5.1]{absil2007optimization}. ~\cite[Example 5.4.1]{absil2007optimization} gives the Riemannian connection of the sphere $\sphere^{d-1}$ which can be used to compute the Hessian.}

The results covered in Section~\ref{sec:glm} was due to~\cite{kakade2011efficient,hazan2015beyond}. The particular exposition was first written by Yu Bai for the statistical learning theory course at Stanford.  

The analysis of the landscape of the PCA objective was derived in~\citet{baldi1989neural,srebro2003weighted}. The main result covered in Section~\ref{sec:mc} is based on the work~\citep{ge2016matrix}. Please see~\citet{ge2016matrix} for more references on the matrix completion problem. 

Nonconvex optimization has also been used for speeding up convex problems, e.g., the Burer-Monteiro approach~\citep{burer2005local} was theoretically analyzed by the work of~\citet{boumal2016non,bandeira2016low}.

Section~\ref{sec:tensor} is based on the work of \citet{ge2015escaping}. Recently, there have been work on analyzing more sophisticated cases of tensor decomposition, e.g., using Kac-Rice formula~\citep{ge2017optimization} for random over-complete tensors. Please see the reference in~\citet{ge2017optimization} for more references regarding the tensor problems.

\bibliographystyle{plainnat}
\bibliography{refs/all}

\end{document}